\newtheorem{assumption}{Assumption}
\newtheorem{definition}{Definition}
\newtheorem{property}{Property}
\newtheorem{remark}{Remark}
\newtheorem{theorem}{Theorem}[section]
\newtheorem{lemma}{Lemma}
\DeclareMathOperator*{\argmin}{arg\,min}
\DeclarePairedDelimiter\round{\lfloor}{\rceil}
\DeclarePairedDelimiter\ceil{\lceil}{\rceil}
\DeclarePairedDelimiter\floor{\lfloor}{\rfloor}
\begin{document}

\title{Towards Federated Learning with on-device Training and Communication in 8-bit Floating Point}

\author{
\begin{tabular}{cccc}
Bokun Wang \thanks{Work done when the author was an intern at Arm.} & Axel Berg \thanks{Corresponding author: \texttt{axel.berg@arm.com}} \thanks{This work was partially supported by the Wallenberg AI, Autonomous Systems and Software Program (WASP), funded by the Knut and Alice Wallenberg Foundation.} & Durmus Alp Emre Acar & Chuteng Zhou \\
\small Texas A\&M University & \small Arm / Lund University & \small Arm & \small Arm \\
\small \texttt{bokunw.wang@gmail.com} & \small \texttt{axel.berg@arm.com} & \small \texttt{durmusalpemre.acar@arm.com} & \small \texttt{chu.zhou@arm.com}
\end{tabular}
}
\date{}
\maketitle
\vspace{-2em}  
\begin{abstract}
Recent work has shown that 8-bit floating point (FP8) can be used for efficiently training neural networks with reduced computational cost compared to training in FP32/FP16. In this work, we investigate the use of FP8 training in a federated learning context. This approach brings not only the usual benefits of FP8 which are desirable for on-device training at the edge, but also reduces client-server communication costs due to significant weight compression. We present a novel method for combining FP8 client training while maintaining a global FP32 server model and provide convergence analysis. Experiments with various machine learning models and datasets show that our method consistently yields communication reductions of at least 2.9x across a variety of tasks and models compared to an FP32 baseline to achieve the same trained model accuracy.
\end{abstract}
\noindent\textbf{Keywords:} federated learning, quantization, FP8

\interfootnotelinepenalty=10000
\section{Introduction}
A large amount of data is generated daily on personal smartphones and other devices at the edge. This data is very valuable for training machine learning models to provide services such as better voice recognition \cite{leroy2019federated} or text completion \cite{hard2018federated}. However, the local data often carries sensitive personal information which needs to be protected for privacy reasons. Furthermore, communication of local data between billions of devices and data centers is expected to occupy lots of network bandwidth and transmission is costly in terms of power consumption, which is a primary concern for edge devices running on batteries.  

Despite these constraints, it is still possible to train a model without having to transmit this local data using federated learning \cite{mcmahan2017communication}. In federated learning, each local device performs training locally with their local data and update their local models. When it comes to communication, the central server receives local models from a subset of devices. The central server then aggregates these local models and transmits a new global model back to those devices for a model update. In this way, no local data is ever exposed during communication and the global model can learn from local data as communication goes on. 

Since its inception, new techniques around federated learning have been proposed to reduce communication cost. The local models, albeit smaller than the local data, are still expensive to transmit via wireless communication and will be taxing on local devices’ battery life if performed very frequently. One method to reduce communication cost is to quantize the models before the communication occurs, and several works have shown that this can be done with limited reduction of model accuracy \cite{gupta2023quantization, honig2022dadaquant, ni2024fedaqt}. 

\begin{figure}[t]
    \centering
    \resizebox{.85\textwidth}{!}{   
    \includegraphics[width=\linewidth, trim={0, 0, 0.5cm, 0},clip]{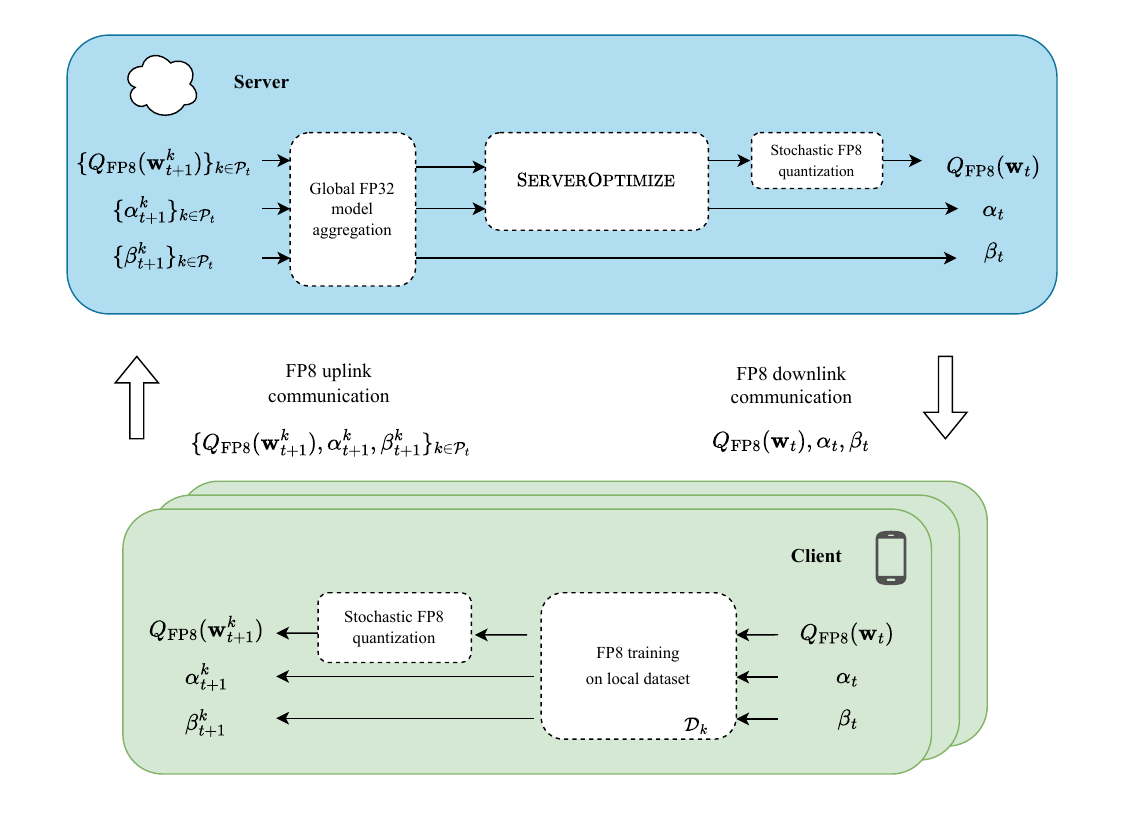}
    }
    \caption{Overview of the proposed federated learning with local FP8 on-device training and weight quantization in both uplink and downlink communication. For time step $t$, the server sends quantized FP8 global model weights $Q_{\text{FP8}}(\mathbf{w}_{t})$ and range parameters $\alpha_t$, $\beta_t$ for the weights and activations respectively. Each client $k$ that participates in the training round then performs FP8 local training and sends back updated weight and range parameters. At the server, new global weights are obtained using global aggregation and an additional optimization step.}
    \label{overview}

\end{figure}

In this paper, we focus on the use of a new type of short floating point number format which has not yet been explored for federated learning: 8-bit floating point (FP8). An 8-bit floating point number is only ¼ of the length of a 32-bit floating point number. Therefore, it has smaller representation power and lower precision than full 32-bit floating point number format, but it offers great savings in terms of model storage and memory access. Computation can be greatly accelerated with the FP8 format because of significantly less bit-wise operations required compared to FP32/FP16. Application of FP8 number format to deep learning model training and inference is in a nascent stage but is widely expected to have fast growth. 

Being a very efficient training datatype, FP8 is a good candidate for on-device training at the edge and the wide industrial support \cite{micikevicius2022fp8} behind it points to wide-spread real-world applications. A future scenario where edge devices can perform efficient on-device training with native hardware support introduces a new class of federated learning problems. It also increases device heterogeneity in a federated learning setting, where participating devices and servers may have different levels of hardware support for FP8.

In this work, we introduce an implementation of federated learning with on-device FP8 training, which learns from quantized models effectively while being efficient in its communication and computing cost. A high-level overview of this method is shown in Figure \ref{overview}. We summarize our main contributions as follows:

\begin{enumerate}
    \item A novel method for combining local FP8 client training with an FP32 server model. The local FP8 training is simulated by quantization-aware training (QAT) and all communication between the server and the clients is done in FP8. Furthermore, we provide an additional optimization procedure for weight aggregation on the server that alleviates potential performance drops caused by quantization, without affecting communication costs.
    \item We provide convergence analysis and motivation for the use of stochastic quantization for communication, but deterministic quantization for QAT. These design choices are further supported by experimental ablation studies.
    \item We proved experiments on image and audio classification benchmarks, showing minimal loss in performance while obtaining large reductions in communication costs for both convolutional and transformer-based models. 
\end{enumerate}

\section{Related Work}

\subsection{Federated Learning with Quantized Communication}
Quantizing model weights is an effective way of reducing the communication cost of federated learning. For example, quantizing weights from 32 to 8 bits immediately reduces the number of communicated bits by 75\% per training round. However, in practice the same functional performance is often not reached by the quantized model, because quantization can cause slower convergence of the training process. As a consequence, the communication reduction obtained to yield a particular performance threshold is often far from the ideal.

An important reason for the slower convergence obtained with quantized communication is that quantization of the model weights will introduce a bias term, which makes the server model a biased estimate of the average client model. In order to alleviate this problem, the use of stochastic rounding has been proposed \cite{zheng2020design, bouzinis2023wireless}. Hence, when aggregating the client models at the server, the stochastic rounding errors tend to zero as the number of clients grows large. This method has also been shown to improve the learning process from a privacy perspective \cite{youn2023randomized}. Similar work, \cite{he2020cossgd}, has shown that stochastic rounding in conjunction with non-linear quantization can reduce the number of communication rounds to reach convergence even more. Nevertheless, there is a limited amount of research on how to effectively combine quantized communication with low-precision client training.

\cite{yoon2022bitwidth} investigated a similar setup as ours, i.e.\ low-precision local training with quantized communication, but only for uniform quanization schemes, such as INT4 and INT8. This leads to sub-optimal performance compared to full-precision training, since many neural networks are known to be sensitive to integer quantization, especially gradients during training are thought to require a bigger dynamic range \cite{kuzmin2022fp8}. For this reason, the industry is coalescing around FP8 for efficient training hardware \cite{micikevicius2022fp8}. This motivates an investigation of local training in low-precision floating point, which provides better training dynamics at the same communication cost.

\subsection{FP8 Quantization for Neural Networks}

While integer representations, have been widely adopted for neural network quantization for efficient inference, the use of FP8 remains relatively new in comparison and has been more focused on model training. A few recent works \cite{wang2018training, sun2019hybrid, peng2023fp8} have proposed centralized neural network training in FP8 with promising results. However, for some networks, a single FP8 representation was found to be insufficient for retaining accuracy in certain operations. Notably, the backwards pass through the network typically requires higher dynamic range than the forward pass. To this end, a binary interchange FP8 format that uses both E4M3 (4-bit exponent and 3-bit mantissa) and E5M2 (5-bit exponent and 2-bit mantissa) representations has been proposed, which allows for minimal accuracy drops compared to FP16 across a wide range of network architectures \cite{micikevicius2022fp8}. Concurrent work, \cite{kuzmin2022fp8}, proposed a similar solution, where the exponent bias is flexible and updated for each tensor during training, which allows for maintaining different dynamic ranges in different parts of the network.

An important factor in neural network quantization that is often overlooked, is that not all model architectures are equally sensitive to quantization, since the distribution of weights and activations are often architecture-dependent. As a consequence, a quantization scheme that works well for e.g.\ convolutional networks might not be suitable for attention-based models. \cite{shen2024efficient} provided a thorough investigation of different quantization schemes for models on a wide variety of applications and found that FP8 quantization, with a combination of E4M3 and E3M4 in particular, was least detrimental to model performance. Similar results were found by \cite{nikolic2024schrodinger} by dynamically adjusting the number of exponent and mantissa bits during training, which lead to large reductions in both memory footprint and energy consumption. This makes FP8 training particularly interesting in a federated training setup, where the available hardware resources are often limited.

\section{Method}
\subsection{Preliminaries}
Consider the federated learning problem, where $K$ clients update their local models by training on disjoint local datasets $\{\mathcal{D}_k\}_{k=1}^K$. Each client minimizes their own local objective functions $F_k(\mathbf{w}, Q, \alpha, \beta) = \mathbb{E}_{(\mathbf{x}, y) \sim \mathcal{D}_k}[l(\mathbf{w}; \mathbf{x}, y, Q, \alpha, \beta)]$, where $\mathbf{w}$ are the model parameters, $Q$ is a quantization operator and $l$ is the loss function. Furthermore, $\alpha$ and $\beta$ are the per-tensor clipping values used for maintaining the dynamic range when quantizing weights and activations respectively. Henceforth, we denote quantized weights based on range $\alpha$ as $Q(\mathbf{w}; \alpha)$. In practice, each layer of the network has its own unique clipping parameters for both the weights and activations, but we omit this in our notation for readability. 

We consider a modification of the standard federated averaging (FedAvg) \cite{mcmahan2017communication} with quantized weights, where the objective is to find

\begin{equation}
\begin{aligned}
    \min_{\mathbf{w}} \quad &F(\mathbf{w}, Q, \alpha, \beta), \\
    F(\mathbf{w}, Q, \alpha, \beta) &= \sum_{k=1}^K \frac{n_k}{n} F_k(\mathbf{w}, Q, \alpha, \beta),
\end{aligned}
\label{standard}
\end{equation}
where $n_k = |\mathcal{D}_k|$ is the number of training samples on the $k$:th device and $n = \sum_k n_k$ is the total number of training examples.

\subsection{Floating Point Representation}
A floating point number $x$ with $e$ exponent bits and $m$ mantissa bits can be written as
\begin{equation}
x = (-1)^s 2^{p-b} \left( 1 + \frac{d_1}{2} + \frac{d_2}{2^2} + \hdots + \frac{d_m}{2^m} \right),
\end{equation}
where $s \in \{0, 1\}$ is the sign bit, $d_i \in \{0, 1\}$ is the mantissa, $p \in \{0, 1, \hdots, 2^{e-1}\}$ is the exponent and $b$ is the exponent bias. In addition, we assume that $p=0$ is reserved for subnormal numbers, which allows an exact representation of 0 and other special values.

Compared to integer quantization, the quantization grid for floating point numbers is not uniform. Increasing the number of exponent bits results in a higher dynamic range, whereas increasing the number of mantissa bits increases the precision. Therefore, give a fixed budget on the number of bits to allocate, there is a trade-off to be made between the two.  

\subsection{On-Device Quantization-Aware Training}
Depending on the hardware support, on-device local training can be performed in native FP8 or using quantization-aware training (QAT), or a mix of the two. Native FP8 training is supported by the latest AI hardware in data centers such as Nvidia's H100/H200 series of GPUs. There is significant industry support behind FP8 and it is only a matter of time for FP8 hardware support to arrive on edge devices. 

For research purposes, we here resort to QAT, and follow the FP8 QAT method described in \cite{kuzmin2022fp8}, using per-tensor quantization for both model weights and activations, with one signed bit, and $m=3$ and $e=4$ bits for the mantissa and exponent respectively, as well as a flexible exponent bias that depends on learnable clipping parameters. QAT with both weights and activations quantization is a good way of simulating native FP8 training on supported hardware with low precision arithmetic. In our setting, we are not simulating the effect of gradient quantization which happens on FP8-enabled hardware. However, previous work, \cite{kuzmin2022fp8}, has shown that it is a good approximation to ignore its effect. 

Let $\mathbf{x} = (x_1, ..., x_d)^T \in \mathbb{R}^d$ denote an FP32 input tensor and $Q_{\text{det}}: \mathbb{R}^d \rightarrow \mathbb{R}^d$ the FP8 deterministic quantization operator with a clipping parameter $\alpha$, whose element-wise outputs are given by 
\begin{align}
Q_{\text{det}}(x_i; \alpha) = 
\begin{cases}
-\alpha, \quad  x_i \leq -\alpha, \\
s_i \round*{\frac{x_i}{s_i}}, \\  
\alpha, \quad x_i \geq \alpha,\\
\end{cases}
\end{align}
where $\round{\cdot}$ denotes rounding to the nearest integer. Here, $s_i = 2^{p_i}$ is a scale factor that is applied before and after rounding. The scale factor can be computed from $p_i$ using
\begin{align}\label{eq:scale}
    p_i = \begin{cases}
        \floor*{\log_2 |x_i|+b} - b - m, &\floor*{\log_2 |x_i| {+} b} > 1 \\
        1 - b - m, &\text{otherwise,}
    \end{cases}
\end{align}

\begin{figure*}
\begin{minipage}[t]{\textwidth}
\begin{algorithm}[H]
\caption{\textsc{LocalUpdate}}\label{alg-device}
\begin{algorithmic}[1]
\REQUIRE $\mathbf{w}, \alpha, \beta, Q, \mathcal{D}$, number of minibatches $P$
\STATE Set  $\mathbf{w}_1 = \mathbf{w}, \alpha_1 = \alpha, \beta_1 = \beta$
\FOR{$p = 1, ..., P$}
    \STATE Sample minibatch $\mathcal{B}_p$ randomly from $\mathcal{D}$
    \STATE Do forward pass with $Q$ on the minibatch $\mathcal{B}_p$
    \STATE Do backwards pass using STE and update $\mathbf{w}_{p+1}, \alpha_{p+1}, \beta_{p+1}$
\ENDFOR
\STATE Return $\mathbf{w}_{P+1}, \alpha_{P+1}, \beta_{P+1}$
\end{algorithmic}
\end{algorithm}
\end{minipage}
\end{figure*}
where the exponent bias depends on the clipping value $\alpha$ as $b = 2^e - \log_2 \alpha + \log_2 (2 - 2^{-m}) - 1$. At training time, $\alpha$ is first initialized using the maximum absolute value of each weight range, and then treated as a learnable parameter that is updated during training. Furthermore, the gradients of the non-differentiable rounding operators are computed using the straight-through estimator (STE) \cite{bengio2013estimating}, i.e.\ $\frac{\partial \round*{x_i}}{\partial x_i} = 1$, with a key exception being $\log_2 |x_i|$, which is treated as a constant following a similar approach as in \cite{kuzmin2022fp8}. Activations are quantized using the same procedure, but with a separate clipping value $\beta$. A summary of the local on-device training procedure is given in Algorithm \ref{alg-device}.

\begin{figure*}
\begin{algorithm}[H]
\caption{\textsc{ServerOptimize}}\label{alg-server}
\begin{algorithmic}[1]
\REQUIRE $\{\mathbf{w}_t^k, \alpha_t^k, n_k\}_{k \in \mathcal{P}_t}, Q$
\STATE Compute $m_t = \sum\limits_{k} n_k$
\STATE Using gradient descent, update the weights as 
$ \mathbf{w}_{t+1} \leftarrow \argmin\limits_{\mathbf{w}} \sum\limits_{k} \frac{n_k}{m_t} \| Q(\mathbf{w}; \alpha_{t+1}) - Q(\mathbf{w}^k_t; \alpha^k_t) \|^2_2 $ 
\STATE Set $S \leftarrow [\min\limits_{k} \alpha_t^k, \max\limits_{k} \alpha_t^k]$
\STATE Using grid search, update the parameters as 
$\alpha_{t+1} \shortleftarrow \argmin\limits_{\alpha \in S} \sum\limits_{k \in \mathcal{P}_t} \frac{n_k}{m_t}\| Q(\mathbf{w}_{t+1}; \alpha) - Q(\mathbf{w}^k_t; \alpha^k_t) \|^2_2 $
\STATE Return $\mathbf{w}_{t+1}, \alpha_{t+1}$
\end{algorithmic}
\end{algorithm}
\end{figure*}

\subsection{Unbiased Quantized Communication}
When applying FP8 QAT to a federated leaning scenario, an important aspect is the ability to reduce communication overhead by transferring weights between clients and the server using only 8 bits per scalar value. 
On client devices with hardware for FP8 mixed-precision training support, a copy of high-precision master weights \cite{NVMixed} is present as in our QAT setup. 
Therefore, at the end of each communication round, the participating clients need to perform a hard reset of their master weights to the de-quantized FP8 values on a quantization grid.
This approach allows for cost reduction in both the uplink and downlink communication.

At each communication round $t$, 
active clients $\mathcal{P}_t$ will send the FP8-quantized 
weights to the central server together with the clipping parameters. However, to form an unbiased estimate of the average client weight, we need a different quantization operator. We therefore introduce stochastic quantization as
\begin{align}
    Q_{\text{rand}}(x_i; \alpha) = s_i 
    \begin{cases} 
      \ceil*{\frac{x_i}{s_i}} & \kappa \leq \frac{x_i}{s_i} - \floor*{\frac{x_i}{s_i}} \vspace{0.2em} \\
      \floor*{\frac{x_i}{s_i}} & \text{otherwise,}
   \end{cases}
\end{align} for $-\alpha \leq x_i \leq \alpha$ and where $\kappa$ is a Bernoulli random variable that takes the values 0 and 1 with equal probability. It is straightforward to verify that this randomized quantization is unbiased from a statistics point of view, i.e.\ for $-\alpha \leq x_i \leq \alpha$ we have $\mathbb{E}[Q_{\text{rand}}(x_i; \alpha)] = x_i$, while the deterministic quantization introduced earlier is biased\footnote{
 The unbiasedness of stochastic quantization assumes a finite domain of the input variable, such that no clipping occurs. While clipping may occur for some values in practice, this does not affect the majority of the weights or activations. In order to simplify theoretical analysis, the assumption of no clipping is often made in the literature, see for example \cite{li2017training}.}.

The quantized weights are then aggregated at the server using a weighted federated average as
\begin{align}
\mathbf{w}_{t+1} \leftarrow \sum_{k \in \mathcal{P}_t} \frac{n_k}{m_t}  Q_{\text{rand}}(\mathbf{w}^k_{t+1}; \alpha^k_{t+1}),
\end{align}
where $m_t=\sum_{{k'} \in \mathcal{P}_t} n_{k'}$. The clipping values are also aggregated, but without quantization, since their contribution to the communication overhead is small relative to the weights. The aggregated weights are then quantized again to FP8 and transmitted to the next set of active clients with a new set of quantization parameters.

An illustrative example of weight quantization is shown in Figure \ref{weight-figure}a-d, where different weight distributions and the corresponding quantization errors are shown for both deterministic and stochastic quantization. For a given scalar weight, deterministic quantization always results in a lower quantization error, but for communication purposes we are more interested in the quantization error of the aggregated server weights. As can be seen in Figure \ref{weight-figure}d, stochastic quantization yields a lower error for the average model. For illustration purposes, we have here assumed that the weights are identical on each device, which is not the case in practice. Nevertheless, with stochastic quantization, the average quantized model will be an unbiased estimate of the average unquantized model.

\begin{figure*}[t]
    \centering
    \begin{subfigure}[t]{0.24\linewidth}
    \centering
    \includegraphics[width=\linewidth]{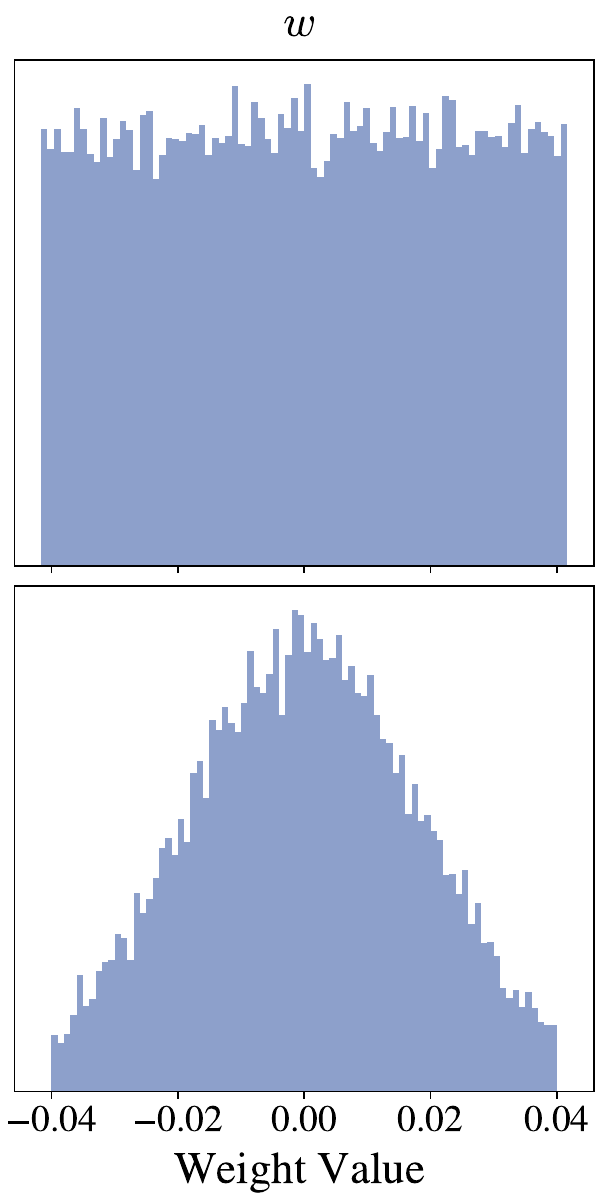}
    \caption{FP32 weight distribution}
    \end{subfigure}
    \begin{subfigure}[t]{0.24\linewidth}
    \centering
    \includegraphics[width=\linewidth]{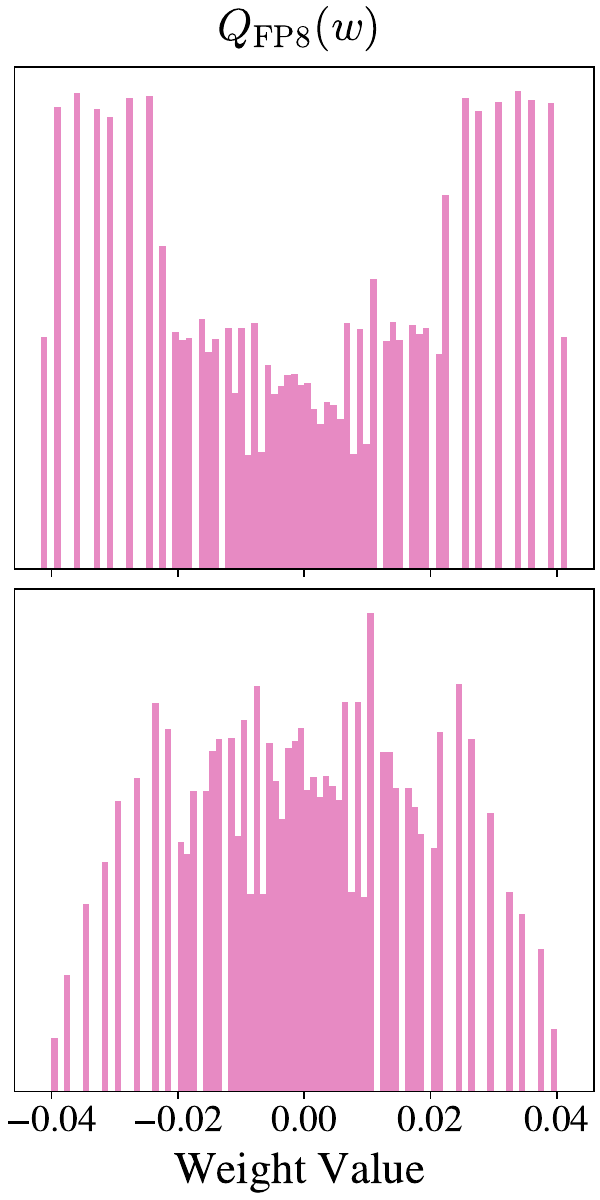}
    \caption{FP8 weight distribution}
    \end{subfigure}
    \begin{subfigure}[t]{0.24\linewidth}
    \centering
    \includegraphics[width=\linewidth]{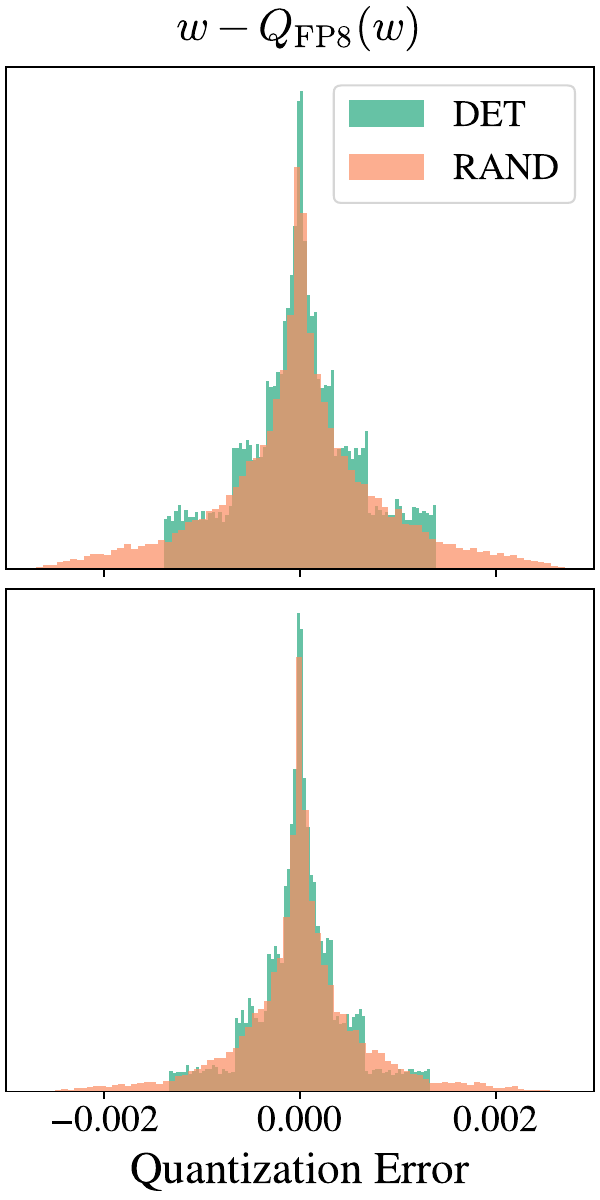}
    \caption{Quantization error \\ distribution.}
    \end{subfigure}
    \begin{subfigure}[t]{0.24\linewidth}
    \centering
    \includegraphics[width=\linewidth]{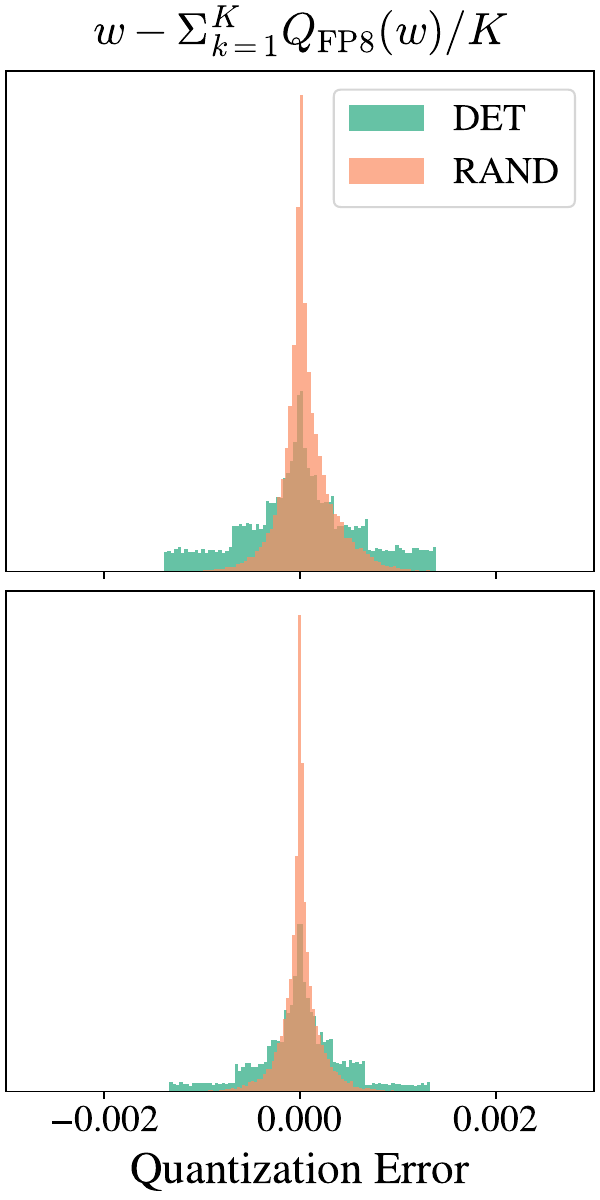}
    \caption{Quantization error \\ distribution of average model.}
    \end{subfigure}
    \caption{Examples of FP8 quantized weights using the E4M3 format and the corresponding quantization errors for different weight distributions. The range $\alpha$ of the quantization is calculated such that the entire distribution fits within the dynamic range of the FP8 representation. In general, deterministic quantization yields the lowest quantization error for any distribution. However, when weights are quantized independently on several devices (here we use $K=10$) and averaged, stochastic quantization yields smaller errors. Top row: uniform weight distribution, which is a common initialization for convolutional layers. Bottom row: truncated normal distribution, which is commonly used to initialize the linear layers in Transformer models.}
    \label{weight-figure}
\end{figure*}

\subsection{Server-Side Optimization (\textsc{ServerOptimize})}
The standard federated average of the weights in the un-quantized scenario corresponds to the minimization of weighted mean squared error (MSE) between the server parameters and the client parameters. However, when the server parameters are quantized before transmission to the clients, this property no longer holds. We therefore propose a modification to the server-side model aggregation, where the MSE is explicitly minimized. This can be done without communication overhead since all computations are done on the server. Another advantage of this approach is that it leverages the computing power of the server to do more optimization, since the server typically possesses more computing power than a client device. 

At time step $t$, we perform model/parameters aggregation to obtain $\mathbf{w}_{t+1}, \alpha_{t+1}$ for the next communication round by minimizing the following mean-squared error (MSE).

\begin{align*}
\mathbf{w}_{t+1}, \alpha_{t+1} =
\argmin_{\mathbf{w},\alpha} \sum_{k \in \mathcal{P}_t} \frac{n_k}{m_t} \| Q_{\text{rand}}(\mathbf{w}; \alpha) - Q_{\text{rand}}(\mathbf{w}^k_t; \alpha^k_t) \|^2_2.
\end{align*}
Note that when there is no communication quantization, the closed-form solution to \textsc{ServerOptimize} is the federated average $\mathbf{w}_{t+1} = \sum_{k\in\mathcal{P}_t} \frac{n_k}{m_t}\mathbf{w}_t^k$. Since the problem above has no closed-form solution for the quantized communication case, we use the \emph{alternating minimization} approach to optimize $\mathbf{w}$ and $\alpha$.
First, we optimize the model weights $\mathbf{w}$, while fixing $\alpha$ to $\alpha_{t+1} = \sum_{k\in\mathcal{P}_t} \frac{n_k}{m_t} \alpha_{t+1}^k$. This is done by minimizing the MSE as 
\begin{align}
\mathbf{w}_{t+1} = 
\argmin_{\mathbf{w}} \sum_{k \in \mathcal{P}_t} \frac{n_k}{m_t} \| Q_{\text{rand}}(\mathbf{w}; \alpha_{t+1}) - Q_{\text{rand}}(\mathbf{w}^k_t; \alpha^k_t) \|^2_2.
    \label{w_optimize}
\end{align}
This minimization can be done using gradient descent with a fixed number of iterations
 
Next, we aim to optimize $\alpha$ while fixing $\mathbf{w}$ to $\mathbf{w}_{t+1}$. 
However, minimizing the MSE with respect to $\alpha$ using gradient descent would require access to $\frac{\partial s_i}{\partial \alpha}$, which is non-differentiable at multiple points and therefore highly numerically unstable. We instead perform a grid search over a fixed set of clipping values uniformly distributed in $S = [\min_{k \in \mathcal{P}_t} \alpha_t^k, \max_{k \in \mathcal{P}_t} \alpha_t^k]$ as

\begin{align}
\alpha_{t+1} = \argmin_{\alpha \in S} \sum_{k \in \mathcal{P}_t} \frac{n_k}{m_t}\| Q_{\text{rand}}(\mathbf{w}_{t+1}; \alpha) - Q_{\text{rand}}(\mathbf{w}^k_t; \alpha^k_t) \|^2_2.\label{alpha_optimize}
\end{align}

The \textsc{ServerOptimize} routine is given in Algorithm \ref{alg-server}, which takes place completely on the server and can be used as an optional step in order to improve aggregation of the model weights.

\subsection{Overall algorithm}
A summary of our proposed FP8 FedAvg with unbiased communication (FP8FedAvg-UQ) method is presented in Algorithm \ref{alg}, where the optional server-optimization step (UQ+) corresponds to replacing the quantized federated averaging of the weight and range parameters with our two-step MSE minimization optimization in Equations \eqref{w_optimize} and \eqref{alpha_optimize}. It is important to note that our method involves two different quantization operators. On-device QAT uses a deterministic and biased quantizer, while the communication part adopts its stochastic counterpart which is unbiased. In the next section, we will give a convergence analysis result for FP8FedAvg-UQ and show that these design choices are well-motivated from a theoretical perspective.

\begin{algorithm}[t]
\caption{{\color{red}FP8FedAvg-UQ}, {\color{blue} FP8FedAvg-UQ+}}\label{alg}
\begin{algorithmic}[1]
\REQUIRE $\mathbf{w}_1, \alpha_1, \beta_1, Q_{\text{det}}, Q_{\text{rand}}$
\FOR{$t = 1, ..., T$}
    \STATE Sample a set $\mathcal{P}_t \in [K]$ of $P$ active devices
    \FOR{each client $k \in \mathcal{P}_t$}
        \STATE Receive $Q_{\text{rand}}(\mathbf{w}_t; \alpha_t), \alpha_t, \beta_t$ from server
        \STATE $\{ \mathbf{w}^k_{t+1}, \alpha^k_{t+1}, \beta^k_{t+1} \}  \leftarrow \textsc{LocalUpdate}(\mathbf{w}_t, Q_{\text{det}}; \alpha_t, \beta_t, \mathcal{D}_k)$
        \STATE Send $Q_{\text{rand}}(\mathbf{w}^k_{t+1}; \alpha^k_{t+1}), \alpha^k_{t+1}, \beta^k_{t+1}$ to server
    \ENDFOR
    \STATE Compute $m_t = \sum\limits_{k\in\mathcal{P}_t} n_k$
    \STATE Compute $\beta_{t+1} \leftarrow  \sum\limits_{k \in \mathcal{P}_t} \frac{n_k}{m_t}\beta^k_{t+1}$
    \STATE {\color{red} Compute $\mathbf{w}_{t+1} \leftarrow \sum\limits_{k \in \mathcal{P}_t} \frac{n_k}{m_t}  Q_{\text{rand}}(\mathbf{w}^k_{t+1}; \alpha^k_{t+1})$ }
    \STATE{\color{red} Compute \indent \indent $\alpha_{t+1} \leftarrow  \sum\limits_{k \in \mathcal{P}_t} \frac{n_k}{m_t} \alpha^{k}_{t+1}$}
    \STATE {\color{blue}$\{ \mathbf{w}_{t+1}, \alpha_{t+1} \}  \leftarrow \textsc{ServerOptimize}(\{\mathbf{w}^k_{t+1}, \alpha^k_{t+1}, n_k\}_{k \in \mathcal{P}_t}, Q_{\text{rand}})$}
\ENDFOR
\STATE Evaluate on $\mathbf{w}_{T+1}, \alpha_{T+1}, \beta_{T+1}$
\end{algorithmic}
\end{algorithm}

\section{Convergence analysis and theoretical motivations}
We briefly state our main convergence theorem here and refer to the Appendix for formal assumption definitions and proof. Please note that we make the simplifying assumption to only consider weight quantization in our proof, which is standard for this type of theoretical analysis.

\begin{theorem}[Convergence of FP8FedAvg-UQ]\label{thm.main}
    \textbf{
    For convex and $L$-smooth federated losses in  \eqref{standard}
    with $G$-bounded unbiased stochastic gradients using an FP8 deterministic quantization method during training and an FP8 unbiased quantization method with bounded scales for model communication, the objective gap $\mathbb{E}\left[F(Q_{\text{rand}}(\mathbf{w}_\tau)) - F(\mathbf{w}_*)\right]$ decreases at a rate of 
    \begin{align*}
        O\biggl(\underbrace{\frac{\Delta^2+G^2U}{\sqrt{TU}} +\frac{UG^2L}{T}}_{\mathcal{T}_1}+\underbrace{\frac{GU^{2.5}S\sqrt{d}L}{\sqrt{T}}}_{\mathcal{T}_2}+\underbrace{S\sqrt{d}G}_{\mathcal{T}_3}\biggl),
    \end{align*}
    where $
    \Delta=\|\mathbf{w}_{1}-\mathbf{w}_{*}\|_2$ is the initial and optimal model difference, $\tau$ is uniformly sampled from $\{1,2,\dotsc,T\}$, $T$ is the number of rounds, 
    $U$ is the total number of updates done in each round, the quantization scales $s_i$ are uniformly bounded by $S$, $\mathbf{w}_1$ is the initial model, and $\mathbf{w}_*$ is an optimal solution of \eqref{standard}.
    }
\end{theorem}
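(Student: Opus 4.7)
The plan is to adapt the standard convergence analysis for local SGD / FedAvg in the convex $L$-smooth setting to accommodate two additional sources of error: the biased deterministic quantization $Q_{\text{det}}$ applied inside local training, and the unbiased stochastic quantization $Q_{\text{rand}}$ applied at each communication step. I would follow the template that starts from a recursion on $\mathbb{E}\|\bar{\mathbf{w}}_t - \mathbf{w}_*\|^2$, where $\bar{\mathbf{w}}_t$ is the virtual average of client iterates, and uses convexity to convert a per-round distance decrease into an averaged objective-gap bound via Jensen's inequality.

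First, inside each local round, I would write the $k$-th client update as standard SGD on $F_k(\cdot, Q_{\text{det}}, \alpha, \beta)$ and decompose the stochastic gradient as the true (unquantized) gradient plus a bias term capturing $Q_{\text{det}}$ in the forward pass. Since $|Q_{\text{det}}(x_i;\alpha) - x_i| \le s_i/2 \le S/2$ per coordinate, the weight perturbation per tensor is bounded by $S\sqrt{d}/2$, and by $L$-smoothness this contributes a gradient bias of order $S\sqrt{d}\,L$. Unrolling $U$ local steps and bounding the client drift through $G$-bounded gradients propagates this bias once more through smoothness when it is translated back to an objective-gap bound, which I expect to be the origin of the $U^{2.5}$ factor in $\mathcal{T}_2$.

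Next, for the aggregation step, I would use the fact that $Q_{\text{rand}}$ is unbiased, so $\mathbb{E}[Q_{\text{rand}}(\mathbf{w};\alpha)\mid \mathbf{w}] = \mathbf{w}$ and the server's quantized FedAvg remains an unbiased estimator of the averaged client iterate. Its conditional variance is bounded coordinate-wise by $s_i^2/4 \le S^2/4$ since $Q_{\text{rand}}$ rounds to an adjacent grid point, giving at most an $O(S^2 d)$ per-round additive variance in the recursion; this is swept into the $\mathcal{T}_1$ / $\mathcal{T}_2$ terms after telescoping. The classical $\mathcal{T}_1$ pieces then appear in the usual way from the local-SGD analysis: $\|\mathbf{w}_1 - \mathbf{w}_*\|^2/(\eta T U)$ from the telescoping of distances, $\eta G^2\sqrt{U}$ from stochastic-gradient noise across $U$ steps, and $\eta^2 U G^2 L$ from the client-drift term, tuned with the step size $\eta$.

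Finally, to handle the evaluation at $Q_{\text{rand}}(\mathbf{w}_\tau)$ rather than $\mathbf{w}_\tau$, I would invoke $G$-Lipschitzness of $F$ (a consequence of $G$-bounded gradients) and the per-coordinate grid bound to get
\[
\mathbb{E}\!\left[F(Q_{\text{rand}}(\mathbf{w}_\tau)) - F(\mathbf{w}_\tau)\right] \;\le\; G\,\mathbb{E}\|Q_{\text{rand}}(\mathbf{w}_\tau) - \mathbf{w}_\tau\| \;\le\; \tfrac{1}{2}G S\sqrt{d},
\]
which is the non-vanishing term $\mathcal{T}_3$. The main obstacle I foresee is pinning down the right power of $U$ in $\mathcal{T}_2$: because the deterministic-quantization bias does not average out, it has to be tracked through $U$ nested smoothness/triangle steps along with the accumulated drift, and getting a polynomial rather than an exponential-in-$U$ dependence requires carefully pairing each use of $L$-smoothness with a corresponding $G$-bound. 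A secondary subtlety is that the per-coordinate scale $s_i$ defined in \eqref{eq:scale} is weight-dependent, so the assumption $s_i \le S$ must be propagated along the trajectory; I would justify this from the bounded-gradient assumption together with a bounded-clipping-parameter argument, so that $S$ can be treated as a uniform constant throughout the recursion.
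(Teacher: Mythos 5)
Your overall template (telescoping on $E\|\mathbf{w}_t - \mathbf{w}_*\|^2$, local SGD unrolling, and unbiasedness of $Q_{\text{rand}}$) is the right skeleton, but there is a genuine gap in how you bound the quantization error, and it is not a cosmetic issue: with your bounds the theorem's rate does not come out.

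The crux is that you bound both quantization errors by \emph{worst-case constants}: $\|Q_{\text{det}}(\mathbf{w})-\mathbf{w}\|_2 \le S\sqrt{d}/2$ per call, and the per-round variance of $Q_{\text{rand}}$ applied to the uplink weights by $O(S^2 d)$. Plugging a constant additive error into the distance recursion
\[
E\|\mathbf{w}_{t+1}-\mathbf{w}_*\|^2 \le E\|\mathbf{w}_t-\mathbf{w}_*\|^2 - 2U\eta_t\,E\bigl[F(\cdot)-F(\mathbf{w}_*)\bigr] + \text{err}_t
\]
and dividing by $2U\eta_t$ after telescoping forces a term of order $\text{err}_t/(U\eta_t)$. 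With $\text{err}_t \sim S^2 d$ and $\eta_t = 1/\sqrt{UT}$ this term is $\Theta(S^2 d \sqrt{T/U})$, which \emph{grows} with $T$; and the deterministic-quantization bias of order $LS\sqrt{d}$ similarly produces a constant (non-decaying) contribution rather than the advertised $\mathcal{T}_2 = O(GU^{2.5}S\sqrt{d}L/\sqrt{T})$. The paper avoids this with a sharper \emph{quantization error decomposition} lemma: if $\mathbf{y}$ is an arbitrary perturbation of an already-quantized point, then $E\|r_Q(Q(\mathbf{x})+\mathbf{y})\|_2^2 \le S\sqrt{d}\,\|\mathbf{y}\|_2$, which is proved by a case analysis over the non-uniform FP8 grid. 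Since every local iterate $\mathbf{v}_{t,u}^k$ and every uplink weight $\mathbf{w}_{t+1}^k$ is of the form $Q_{\text{rand}}(\mathbf{w}_t) - \eta_t \sum_{s\le u}\nabla F_k(\cdot;\xi)$, the perturbation norm is at most $U\eta_t G$, so the quantization-error terms shrink to $O(S\sqrt{d}GU\eta_t)$ per round; dividing by $2U\eta_t$ then yields exactly the irreducible $\mathcal{T}_3 = S\sqrt{d}G$ from the communication quantizer and, once propagated through the drift $V_t$ and the factor $\eta_t L U$, the $1/\sqrt{T}$-decaying $\mathcal{T}_2$. This ``error scales with local movement'' observation is the central technical ingredient you are missing; without it you cannot recover the theorem's rate.

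Two further, more minor, differences: (i) your gradient-bias decomposition $\nabla F_k(Q_{\text{det}}(\mathbf{v})) = \nabla F_k(\mathbf{v}) + b$, $\|b\|\le L S\sqrt{d}$, requires bounding $\langle b, \mathbf{w}_t - \mathbf{w}_*\rangle$, i.e.\ a bounded-domain assumption that the theorem does not make; the paper instead uses the convexity-plus-smoothness three-point inequality $-\langle\nabla F_k(\mathbf{w}),\mathbf{y}-\mathbf{x}\rangle \le -F_k(\mathbf{y}) + F_k(\mathbf{x}) + \tfrac{L}{2}\|\mathbf{y}-\mathbf{w}\|^2$ applied with $\mathbf{w}=Q_{\text{det}}(\mathbf{v}_{t,u}^k)$ and $\mathbf{y}=Q_{\text{rand}}(\mathbf{w}_t)$, which produces the drift $V_t$ directly and dispenses with any diameter bound. (ii) Your route to $\mathcal{T}_3$ via $G$-Lipschitzness of $F$ applied to $F(Q_{\text{rand}}(\mathbf{w}_\tau)) - F(\mathbf{w}_\tau)$ is a valid alternative in isolation, but the paper never needs it: the three-point inequality already produces $F(Q_{\text{rand}}(\mathbf{w}_t)) - F(\mathbf{w}_*)$ on the left-hand side of the recursion, so $\mathcal{T}_3$ falls out of the same quantization-error bound as above rather than from a separate Lipschitz argument.
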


\textbf{Proof Structure.} The proof builds on upper bounding a drift quantity   similar to the one defined in \cite{karimireddy2020scaffold} as\footnote{See Appendix \ref{single-round} on ${\mathbf{w}}_{t,u}^k$ definition for inactive devices.},
\begin{align}
    V_t=\frac{1}{KU}\sum_{k\in[K]}\sum_{u\in[U]}E\left\|Q_{\text{rand}}\left({\mathbf{w}}_{t}\right)-Q_{\text{det}}\left({\mathbf{w}}_{t,u}^k\right)\right\|^2_2\nonumber.
\end{align}
Note that if local models diverge, we get a higher $V_t$. If there is no quantization and local models converge, we get $V_t=0$. We focus on $V_t$ and the server model $\mathbf{w}$ in a single communication round and give the following Lemma as,

\begin{lemma}
\textit{For a setting that satisfies the assumptions listed in Theorem \ref{thm.main}, we have,
\begin{flalign}
  V_t \le 18&U^3S\sqrt{d} G \eta_t  + 9U^2\eta_t^2G^2, &&\label{m.A.2}\\
  E\|\mathbf{w}_{t+1}-\mathbf{w}_{*}\|^2_2- E\left\|{\mathbf{w}}_{t}-\mathbf{w}_{*} \right\|^2_2\le&\eta_tL U V_t\label{m.A.1}+2S\sqrt{d} G U \eta_t+\eta_t^2U^2G^2&&\nonumber\\
  &-2U\eta_tE\left(F\left(Q_{\text{rand}}\left({\mathbf{w}}_{t}\right)\right)-F\left(\mathbf{w}_{*}\right)\right), &&
\end{flalign}
}
\end{lemma}

where $\eta_t$ is the learning rate. By combining the equations with proper coefficients, we get a telescoping sum on the difference between the server model and the optimal model, as well as the accumulation of the loss with respect to the optimal model. This leads to Theorem \ref{thm.main} and we provide the full proof in Appendix \ref{A.flq}.

\begin{remark}\label{r.t.1}
$\mathcal{T}_1$ is a term similar to SGD convergence where it decreases with $O\left(\frac{1}{\sqrt{T}}\right)$ and depends on the bound on the second moment of stochastic gradient $G$, the smoothness  $L$, as well as the $\ell_2$-distance between the initial model $w_1$ and the optimal solution $w_*$.
\end{remark}
\begin{remark}
    $\mathcal{T}_2$ and $\mathcal{T}_3$ are terms that arise due to quantization. Due to \eqref{eq:scale} and the definition of $S$, the terms $\mathcal{T}_2$ and $\mathcal{T}_3$ exponentially decay when the number of mantissa bits $m$ increases, i.e., $\mathcal{T}_2\propto 2^{-m},\mathcal{T}_3\propto 2^{-m}$. 
\end{remark}
\begin{remark}\label{r.unbiased} We emphasize that unbiased quantization during communication is crucial. In the case of biased communication, the convergence proof does not hold and one can construct even diverging cases~\cite{beznosikov2023biased} for FedAvg. To ensure convergence for biased communication, we need more sophisticated techniques such as error feedback~\cite{richtarik2021ef21}.
\end{remark}
\begin{remark}\label{r.det} 
    Deterministic quantization is used during training. We bound the norm of QAT quantization error in the proof. Since deterministic quantization has a smaller error norm than stochastic one, this motivates us to use deterministic quantization during training.
\end{remark}
As we shall see in the next section, we observe strictly worse results if we use stochastic quantization during training or deterministic quantization during model transmission in our experiments, which aligns with the remarks above.

\section{Experiments and Ablation Studies}

We test our proposed method on two different tasks: image classification and keyword spotting. For each task, we use two different models and perform experiment using both an i.i.d.\ and a non-i.i.d.\ dataset split across clients. In the i.i.d.\ setup, the dataset is split randomly across the set of clients. In the non-i.i.d.\ setup, we simulate a more realistic heterogenity across clients which is specific to each dataset.

\subsection{Datasets and models}

\textbf{Image classification.} For image classification, we use the CIFAR10 and CIFAR100 datasets \cite{Krizhevsky09learningmultiple}, which consist of 60 000 examples of 32x32 color images divided into 10 and 100 different classes respectively. For this task, we adopt two different convolutional networks: 1) LeNet \cite{lecun1998gradient}, which has 800K parameters, and ResNet18 \cite{he2016deep}, with 11M parameters. For the ResNet model, we replace batch normalization after the convolutional layers with GroupNorm \cite{wu2018group}, since this is known to work better in a federated setting with skewed data distributions \cite{hsieh2020non}.

\begin{wrapfigure}{l}{0.5\linewidth}
    \centering
    \includegraphics[trim={1cm 1cm 1cm 1cm},clip,width=0.8\linewidth]
    {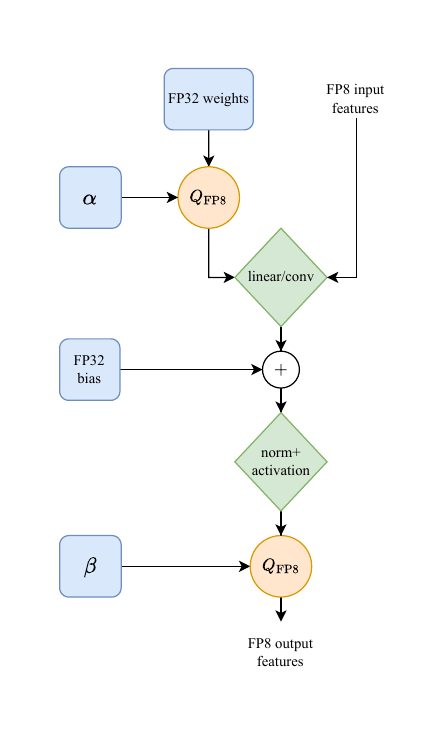}
    \caption{Illustration of the QAT method for a linear/convolutional layer. FP32 master weights are quantized using the range parameters $\alpha$, before being applied to the already quantized input. We then apply addition of bias, an optional normalization layer and the activation function in FP32, before quantizing to FP8 again using the range parameters $\beta$. In a real application, these operations could be performed using mixed-precision hardware.}
    \label{qat-diagram}

\end{wrapfigure}

In the (independent and identical and distributed) i.i.d. scenario, we use $K=100$ clients, a participation rate of $C=0.1$ in each round and train for $R = 1000$ rounds with a local batch size of $B=50$, where each client trains for 5 local epochs. In the non-i.i.d.\ image classification setting we sample the local datasets from a Dirichlet distribution with a concentration parameter of 0.3. Furthermore, we use SGD as the local optimizer with a constant learning rate of 0.1, weight decay of 0.001 and random cropping and horizontal flipping for data augmentation.

\textbf{Keyword spotting.} In order to apply our method to a more realistic federated learning task and more advanced model architectures, we resort to keyword spotting, where the task is to classify short snippets of audio as words in a small dictionary. For this, we use the Google SpeechCommands v2 dataset, which consists of 105 000 one-second recordings belonging to one of 35 classes \cite{warden2018speech}. Examples of classes in the dictionary include short words like \enquote{go}, \enquote{yes} and \enquote{on}. 

For the keyword spotting task, we train two different models: MatchboxNet3x1x64 \cite{majumdar20_interspeech}, which uses 1D time-channel separable convolutions with skip-connections, as well as the Keyword Transformer (KWT-1) \cite{berg21_interspeech} model, which consists of sequential transformer layers with time-domain self-attention. These two models have 350K and 450K parameters respectively and they both use mel-frequency cepstral coefficients (MFCCs) as input features. 

A known problem when training transformers is that SGD optimization requires many iterations and may fail to converge to a good solution \cite{kunstner2023noise}. Due to the relatively small number of local updates in a training round, SGD is therefore not a suitable optimization method for transformers in this setup. To achieve better training convergence, we instead use the momentum-based AdamW \cite{loshchilov2018decoupled} as local optimizer for the clients, where the states of the local optimizer for each active client are reset at the start of each communication round. Furthermore, we use an initial learning rate of 0.001 with a cosine decay scheduler and a weight decay of 0.1. For data augmentation, we apply SpecAugment \cite{park19e_interspeech} to the MFCCs during training.

When training in the i.i.d.\ scenario, we use the same setup for keyword spotting as for image classification. However, since the SpeechCommands datasets provides speaker identity as well, we can simulate a more realistic heterogenity for the non-i.i.d.\ dataset split. We therefore use the setup proposed in \cite{li22g_interspeech}, such that all recordings from a given speaker are assigned to a single client. This results in a total of $K=2112$ clients. In order to get a similar number of total training steps as in the i.i.d\ scenario, we reduce the participation rate to $C = 0.01$, the number of rounds to $R=500$ and the number of local gradient updates to 50.

\subsection{Mixed Precision Quantization Implementation}

Since most of the bandwidth during the communication phase between the server and clients is used for transmitting the model weights in the fully connected and convolutional layers, we focus on learning FP8 representations for these. Hence, we simulate a mixed-precision training scenario where these parameters are trained in FP8, whereas bias parameters and normalization layer parameters (GroupNorm for convolutional networks and LayerNorm for transformers) are trained in FP32, since initial experiments showed these to be sensitive to quantization. Note that this only results in a negligible impact on the client-server communication, since these parameters account for less than 2 \% of the total parameter count in the models.

An illustration of how mixed-precision training can be simulated using QAT is shown in Figure \ref{qat-diagram}. The diagram illustrates the necessary operations for quantizing most layers in the networks we use. A notable exception is the self-attention layers in the transformer model. In these layers, we perform calculations of the keys, queries and values, as well as calculation of the dot-product attention scores in FP8. Softmax normalization of the attention scores is done in FP32, similar to how it is done for other activation functions.

For all experiments, we have used $m = 3$ and $e = 4$ bits for the mantissa and exponent respectively. In addition, when performing server-side optimization of weight aggregation, we use 5 gradient descent steps when optimizing for the quantized weights in Equation \eqref{w_optimize}, where the learning rate was selected using grid search in $\{0.01, 0.1, 1\}$, and 50 grid points when optimizing the range values in Equation \eqref{alpha_optimize}.

\subsection{Results}

The results are shown in Table \ref{tab:results}, where we present the final centralized test accuracy and standard deviation across three random seeds, as well as the average communication gain compared to an FP32 baseline. In order to compare communication costs, we do not pick a common accuracy threshold for all methods, but instead calculate the gains individually for each method as the reduction in communicated bytes compared to FP32 training at the maximum accuracy reached by both FP32 and FP8. 

In Table \ref{tab:results} it can be seen that for most datasets and methods, FP8-FedAvg-UQ achieves similar test accuracy as the FP32 baseline, which results in communication gains of 4.2x on average, and for some experiments larger than 9x. Note that even though FP8 quantization sometimes results in a small accuracy drop, a large communication gain is still possible due to less data being transferred in each communication round. However, in certain cases, for example when applying FP8 quantization to LeNet on CIFAR100, accuracy increases significantly compared to the FP32 baseline. For these experiments, we observed that FP8 quantization to some extent prevented overfitting to the local client datasets, and thereby acted as a regularizer. This effect of quantization has been observed in other studies as well \cite{askarihemmat2022qreg}. 

Table \ref{tab:results} further shows results for i.i.d.\ and a more realistic non-i.i.d. settings. Here, we see an expected accuracy drop when moving from the i.i.d.\ to the non-i.i.d. setting, yet our method shows gains in both scenarios. We note that our method could potentially be combined with more advanced federated learning algorithms in heterogenous settings to improve accuracy levels in the non-i.i.d. setting as well.

Finally, we note that the server-side optimization in most scenarios yields additional performance improvements, with an average gain of 4.5x. The impact of this additional optimization is most notable when the FP8 quantization results in an accuracy drop compared to FP32. On the contrary, when there is no accuracy loss, the improvement due to server optimization is smaller. Overall, the quantized communication in combination with server-side optimization results in communication gains greater than 2.9x compared to FP32 across all tasks and models.

\begin{table*}[ht]
    \centering
    \caption{Final test accuracy and communication gain compared to FP32 FedAvg for our proposed methods on CIFAR10/CIFAR100 and Google SpeechCommands.}
    \resizebox{.8\linewidth}{!}{
    \begin{tabular}{c|c|ccc}
    \toprule
    Model & Setting & FP32 FedAvg & FP8 FedAvg-UQ & FP8 FedAvg-UQ+   \\ \midrule
    \multicolumn{5}{c}{\bf CIFAR10} \\ \midrule
    \multirow{2}{*}{LeNet} & i.i.d.\  & $82.1 \pm 0.1$ / $1\times$ & $82.0 \pm 0.1$ / $4.1\times$ & $82.2 \pm 0.3$ / $4.7\times$ \\
    & Dir(0.3) & $77.1 \pm 0.4$ / $1\times$ & $77.3 \pm 0.9$ / $3.9\times$ & $77.7 \pm 0.5$ / $3.9\times$ \\
    \multirow{2}{*}{ResNet18} & i.i.d\ & $92.0 \pm 0.1$ / $1\times$ & $91.1 \pm 0.2$ / $3.4\times$ & $92.0 \pm 0.1$ / $3.9\times$ \\
    & Dir(0.3) & $85.5 \pm 0.5$ / $1\times$ & $87.4 \pm 0.7$ / $5.2\times$ & $87.5 \pm 0.5$ / $5.2\times$ \\ \midrule
    \multicolumn{5}{c}{\bf CIFAR100} \\ \midrule
    \multirow{2}{*}{LeNet} & i.i.d.\  & $43.0 \pm 0.3$ / $ 1 \times$ & $44.8 \pm 0.4$ / $6.0 \times$ & $ 44.9 \pm 0.5$ / $6.0 \times$  \\
    & Dir(0.3) & $38.3 \pm 0.7$ / $1 \times$ & $41.1 \pm 0.3$  / $9.1 \times$ & $41.3 \pm 0.7$ / $9.5 \times$ \\
    \multirow{2}{*}{ResNet18} & i.i.d\ & $64.6 \pm 0.3$ / $ 1\times$ & $64.0 \pm 0.2$ / $3.5 \times$ & $64.6 \pm 0.1$ / $4.0 \times$  \\
    & Dir(0.3) &  $56.1 \pm 0.7$ / $1 \times$ &$55.4 \pm 0.6$ / $3.6 \times$ & $55.5 \pm 0.6$ / $3.6 \times$ \\ \midrule
    \multicolumn{5}{c}{\bf SpeechCommands} \\ \midrule
    \multirow{2}{*}{MatchboxNet} & i.i.d.\ & $91.5 \pm 0.3$ / $1 \times$ & $90.0 \pm 0.4$ / $3.5 \times$ & $90.8 \pm 0.4$ / $3.4\times$   \\
    & speaker-id & $79.6 \pm 0.7$ / $1 \times$ & $75.4 \pm 0.6$ / $3.1 \times$ & $77.0 \pm 1.3$ / $3.3 \times$  \\
    \multirow{2}{*}{KWT-1} & i.i.d\ & $91.4 \pm 0.4$ / $1 \times$ & $89.2 \pm 0.3$ / $2.3 \times$ & $90.7 \pm 0.2$ / $2.9 \times$  \\
    & speaker-id & $83.2 \pm 0.2$ / $1 \times$ & $79.6 \pm 0.5$ / $2.9 \times$ & $82.4 \pm 0.8$ / $3.7 \times$ \\ 
    \midrule
    Average gain & \multicolumn{1}{c}{} & \multicolumn{1}{r}{$1 \times$} & \multicolumn{1}{r}{$4.2 \times$} & \multicolumn{1}{r}{$4.5 \times$}  \\
    \bottomrule
    \end{tabular}
    }
    \label{tab:results}
\end{table*}

\subsection{Ablation studies}

\begin{table*}[t]
    \centering
    \caption{Final test accuracy on CIFAR10 and CIFAR100 (i.i.d.) for deterministic/stochastic QAT without quantized communication compared to FP32.}
    \resizebox{.5\linewidth}{!}{
    \begin{tabular}{c|ccc} 
    \toprule
     Model & FP32 FedAvg & det.\ QAT & rand.\ QAT  \\ \midrule
     \multicolumn{4}{c}{\bf CIFAR10} \\ \midrule
     LeNet  & $82.1 \pm 0.1$ & $82.1 \pm 0.2$  & $ 82.0 \pm 0.2$  \\
     ResNet18 & $92.0 \pm 0.1$ & $91.9 \pm 0.2$ & $91.8 \pm 0.3$ \\ \midrule
    \multicolumn{4}{c}{\bf CIFAR100} \\ \midrule
     LeNet & $43.0 \pm 0.3$ &$44.4 \pm 0.5$  & $43.7 \pm 0.6$  \\
     ResNet18 & $64.6 \pm 0.3$ & $64.5 \pm 0.1 $ & $63.5 \pm 0.5$ \\ \bottomrule
    \end{tabular}
    }
    \label{tab:qat_ablation}
    \centering
    \caption{Final test accuracy on CIFAR10 and CIFAR100 (i.i.d.) for different quantized communication (CQ) methods with deterministic QAT compared to FP32.}
    \resizebox{.5\linewidth}{!}{
    \begin{tabular}{c|ccc} \toprule
     Model & FP32 FedAvg  & det.\ CQ & rand.\ CQ \\ \midrule
    \multicolumn{4}{c}{\bf CIFAR10} \\ \midrule
     LeNet  & $82.1 \pm 0.1$ & $80.1 \pm 0.1$  & $ 82.0 \pm 0.1$  \\
     ResNet18 & $92.0 \pm 0.1$ & $91.1 \pm 0.1$ & $91.7 \pm 0.2$ \\ \midrule
     \multicolumn{4}{c}{\bf CIFAR100} \\ \midrule
     LeNet & $43.0 \pm 0.3$ & $38.0 \pm 0.4$  & $44.8 \pm 0.4$   \\
     ResNet18 & $64.6 \pm 0.3$ & $62.5 \pm 0.9$   & $64.0 \pm 0.2$\\ \bottomrule
    \end{tabular}
    }
    \label{tab:cq_ablation}
    \hfill
\end{table*}

\begin{figure*}
    \centering
    \includegraphics[width=\linewidth]{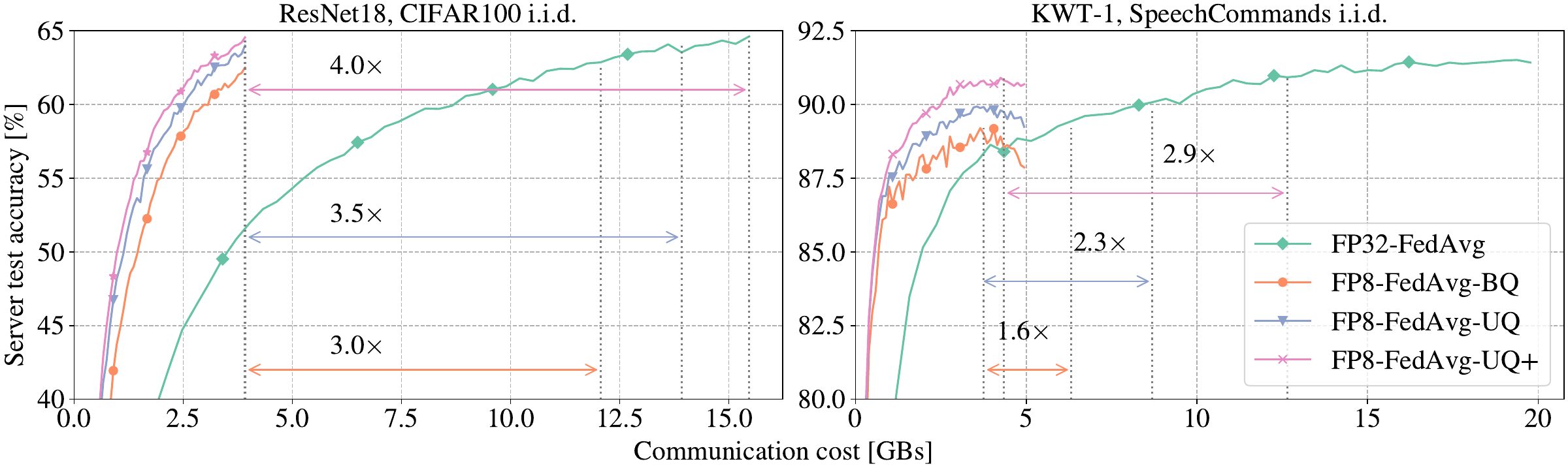}
    \caption{Server test accuracy versus communication cost for FP32 FedAvg, and FP8 QAT with biased (BQ)/unbiased (UQ) quantization for communication, and server-side optimization (UQ+). For each method, the communication gain compared to FP32 has been highlighted with arrows.}
    \label{communication}
\end{figure*}

Next, we ablate the use of deterministic and stochastic quantization in order to validate our design choices. Table \ref{tab:qat_ablation} shows the server test accuracy when using the two different quantization methods for the on-device QAT training. In general, we observe that performing local QAT does not severely impact the accuracy of the global model. In some cases, it can even increase test accuracy, which is the case for LeNet on CIFAR100, which we attribute to the quantization noise acting as a regularizer during training. Overall, deterministic quantization works slightly better, which can also be understood intuitively, since in each forward pass through the network, deterministic quantization results in a smaller quantization error. We refer to Appendix \ref{qat-section} for more details about QAT convergence.

In Table \ref{tab:cq_ablation}, we ablate the effect of deterministic and stochastic quantization in server-device communication, and it is clear that stochastic quantization results in both higher accuracy and gain. This is in agreement with Remark \ref{r.unbiased}, which shows that stochastic quantization is important for the convergence of our algorithm. In addition, we show the server test accuracy as a function of communication cost for different methods in Figure \ref{communication}. Here we can clearly see the gain arising from quantized communication, as well as the benefits of stochastic quantization and server-side optimization.

\section{Conclusions and Future Work}
In this work, we show that on-device FP8 QAT training combined with quantized communication can be integrated into a federated learning setting with a well-designed algorithm. Our results show that this can be done with minimal drop in model predictive performance, while obtaining large savings in terms of communication cost. This opens up a wide range of possibilities in terms of exploiting client heterogeneity. For example, it allows for combining devices with different computational capabilities, which could involve training with different levels of precision in different clients. The design of our algorithm is well-motivated by theory. We show results for various different models and datasets, and ablation studies to validate design choices.

Furthermore, our paper has important practical implications for machine learning hardware and system design, our results suggest that both deterministic and stochastic modes need to be supported by hardware FP8 quantizer for better training convergence in a distributed or federated setting.

Finally, since the use of low-precision number formats is orthogonal to the optimization method itself, our proposed method may be extended beyond standard federated averaging. FP8 local training may therefore be combined with more advanced optimization techniques that deal with problems such as client drift. We leave this as future work and hope our work will inspire the wider research community to explore different combinations of reduced precision floating point number formats in a federated learning setting.

\newpage

\bibliography{references}
\bibliographystyle{plain}

\onecolumn
\appendix
For FedAvg convergence proof, our analysis builds on \cite{karimireddy2020scaffold,li2017training,acar2021federated}. \cite{karimireddy2020scaffold,acar2021federated} focus on debiasing the local losses in a standard non-quantized federated learning setting. Differently, we show convergence using quantization aware training in federated learning. We can further extend our analysis to use the sophisticated debiasing methods for better heterogeneity control. \cite{li2017training} proves convergence of different quantization aware training schemes in a centralized non-federated setting. Differently, we give convergence of quantization aware training in a distributed federated learning setting.
Additionally, we give a proof for more general non-uniform quantization grids such as FP8, which is different from the uniform quantization consideration in \cite{li2017training}.

\section{Quantization Function}
\begin{definition}[Quantization]\label{d.q.0} 
For an unquantized number $x$, we define the quantization of $x$ as
\begin{align}\nonumber
    Q_{\text{rand}}(x) = s
    \begin{cases} 
      \ceil*{\frac{x}{s}} & p \leq \frac{x}{s} - \floor*{\frac{x}{s}}\\
      \floor*{\frac{x}{s}} & \text{otherwise,}
   \end{cases}
   , \ \ \ \ Q_\text{det}({ x}) = s\round*{\frac{x}{s}},
\end{align}
where $p\in[0,1]$ is a random variable. We omit the parameter of quantization for the sake of simplicity in the notation. We overload the notation and define quantization of a vector $\mathbf{x}\in\mathbb{R}^d$ as the element-wise quantization of the vector, $Q({\mathbf x})=\left[Q\left(\left[{\mathbf x}\right]_1\right),Q\left(\left[{\mathbf x}\right]_2\right),\ldots,Q\left(\left[{\mathbf x}\right]_d\right)\right]^T$
\end{definition}

Let's define the quantization error.
\begin{definition}[Quantization Error]\label{d.q.1} Let $r_Q\left(\mathbf{w}\right)=Q\left(\mathbf{w}\right)-\mathbf{w}$. 
\end{definition}

Note that if $Er_Q\left(\mathbf{w}\right)=\mathbf{0}$, we have an unbiased quantization as in our model transmission where expectation is over the randomness of the quantization.

Note that we simplified the definition by ignoring the quantization based learnable parameters such as $\alpha$ and $\beta$ in our proof. Hence, we redefine them here.

\section{Convergence Analysis of Quantization-aware Training (QAT)} \label{qat-section}
As a warmup, we provide the convergence analysis of QAT training on a single machine, similar to the one in \cite{li2017training}.

We want to find {\it a quantized} model that solves $\min_\mathbf{w\in\mathbb{R}^d} F(\mathbf{w})$. We start with an unquantized model as $\mathbf{w}_1$ and use QAT training as $\mathbf{w}_{t+1} = \mathbf{w}_t -\eta_t \nabla F\left(Q\left(\mathbf{w}_t\right);\xi_t \right)$ where $\eta_t$ is learning rate and $\xi_t$ controls randomness of SGD at iterate $t$. Let us define the best model as $\mathbf{w}_{*}=\arg\min_\mathbf{w} F(\mathbf{w})$.

Our analysis is based on the following assumptions on the objective function $F$.

\begin{assumption}[Convexity]\label{a.f.1}
We assume that $F$ is differentiable and convex, i.e.,
$$-\left\langle \nabla F ({\mathbf x}), {\mathbf x}- {\mathbf y}\right\rangle \leq -F({\mathbf x}) +F({\mathbf y}) \ \ \ \forall {\mathbf x}, {\mathbf y}.$$
\end{assumption}

\begin{assumption}[Bounded Unbiased Gradients]\label{a.f.4}
We assume the gradients are unbiased and bounded.
$$E_\xi[\nabla F\left(\mathbf{x};\xi \right)] = \nabla F\left(\mathbf{x} \right),\ \  E_\xi\|\nabla F\left(\mathbf{x};\xi \right)\|^2_2 \le G^2\ \ \ \forall {\mathbf x}.$$
where $\xi$ defines the randomness due to stochastic gradient estimator. The algorithm draws $\nabla F\left(\mathbf{x};\xi \right)$ instead of $\nabla F\left(\mathbf{x}\right)$.
\end{assumption}

\begin{assumption}[Bounded Quantization Scales]\label{a.q.1}
We assume the scales $s_i$ are uniformly upper bounded during the training by a constant $S$.
\end{assumption}

Next, we provide an upper bound on the quantization error.

\begin{lemma}\label{l.r.1}
If assumption \ref{a.q.1} 
holds, we have,
    $$E\|r_Q\left({\mathbf w}\right)\|_2\le \sqrt{d}S$$
\end{lemma}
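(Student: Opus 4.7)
The plan is to reduce the vector bound to a coordinate-wise bound, apply the scale assumption, and then sum. Since the claim involves the $\ell_2$ norm of the quantization error, I will first argue an almost-sure bound on $\|r_Q(\mathbf{w})\|_2$ and then take expectation (which is trivial once we have an almost-sure bound).

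First I would show that for each coordinate $i$, $|[r_Q(\mathbf{w})]_i| = |Q(w_i) - w_i| \le s_i$ almost surely. For the stochastic quantizer in Definition~\ref{d.q.0}, the output is either $s_i \lfloor w_i/s_i \rfloor$ or $s_i \lceil w_i/s_i \rceil$, both of which lie within distance $s_i$ of $w_i$ since $w_i \in [s_i \lfloor w_i/s_i \rfloor, \, s_i \lceil w_i/s_i \rceil]$ and the interval has length at most $s_i$. (In fact the bound $s_i/2$ holds for the deterministic rounding, but $s_i$ suffices and covers both quantizers uniformly.)

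Next I would invoke Assumption~\ref{a.q.1} to replace $s_i$ by its uniform upper bound $S$, giving $|[r_Q(\mathbf{w})]_i| \le S$ almost surely for every coordinate $i$. Summing over the $d$ coordinates,
\begin{align*}
\|r_Q(\mathbf{w})\|_2^2 \;=\; \sum_{i=1}^d [r_Q(\mathbf{w})]_i^2 \;\le\; \sum_{i=1}^d S^2 \;=\; d S^2,
\end{align*}
so $\|r_Q(\mathbf{w})\|_2 \le \sqrt{d}\,S$ almost surely. Taking expectation over the quantization randomness preserves the inequality, yielding the claim $E\|r_Q(\mathbf{w})\|_2 \le \sqrt{d}\,S$.

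There is no real obstacle here; this is a warm-up lemma. The only subtlety to flag is that the bound uses the coarser estimate $s_i$ (rather than $s_i/2$) so that a single argument covers both the deterministic and stochastic quantizers introduced in Definition~\ref{d.q.0}, which is what later sections of the proof will need.
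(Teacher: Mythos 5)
Your proof is correct and takes essentially the same route as the paper's: bound each coordinate of $r_Q(\mathbf{w})$ by $S$ (almost surely) using Assumption~\ref{a.q.1}, sum over the $d$ coordinates to get $\|r_Q(\mathbf{w})\|_2 \le \sqrt{d}\,S$, and take expectation. You simply spell out the coordinate-wise step (that the quantizer output lies within one grid cell of the input) and note explicitly that the bound covers both $Q_{\text{det}}$ and $Q_{\text{rand}}$, which the paper leaves implicit.
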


\begin{proof}
  Each dimension of $r_Q\left({\mathbf w}\right)$ can be at max $S$. We have $d$ dimensions. Hence, $E\|r_Q\left({\mathbf w}\right)\|_2\le \sqrt{d}S$  
\end{proof}

We can then prove the following lemma on the $t$-th iteration of QAT training. 
\begin{lemma}[QAT step update]\label{l.r.2}
If assumptions \ref{a.f.1}, 
\ref{a.f.4}, and \ref{a.q.1}
hold and $\eta_t = \frac{1}{\sqrt{T}}$, we have,
$$E\left[F\left(Q\left(\mathbf{w}_t\right)\right)-F\left(\mathbf{w}_{*}\right)\right] \leq \frac{\sqrt{T}}{2}\left[-E\|\mathbf{w}_{t+1}-\mathbf{w}_{*}\|^2_2+ E\|\mathbf{w}_{t}-\mathbf{w}_{*}\|^2_2\right]+G\sqrt{d}S+ \frac{1}{2\sqrt{T}} G^2$$
\end{lemma}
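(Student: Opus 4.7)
The plan is to follow the standard one-step SGD analysis, but carefully handle the mismatch that the gradient is evaluated at $Q(\mathbf{w}_t)$ while the iterate being updated is $\mathbf{w}_t$. Concretely, I would start from the squared-distance recursion
\begin{align*}
\|\mathbf{w}_{t+1}-\mathbf{w}_*\|_2^2 = \|\mathbf{w}_t-\mathbf{w}_*\|_2^2 - 2\eta_t\langle \nabla F(Q(\mathbf{w}_t);\xi_t),\, \mathbf{w}_t-\mathbf{w}_*\rangle + \eta_t^2\|\nabla F(Q(\mathbf{w}_t);\xi_t)\|_2^2 ,
\end{align*}
take expectation over $\xi_t$ conditioned on $\mathbf{w}_t$, and use the unbiased-gradient and $G^2$-second-moment parts of Assumption \ref{a.f.4} to replace the cross term by $\langle \nabla F(Q(\mathbf{w}_t)), \mathbf{w}_t-\mathbf{w}_*\rangle$ and bound the last term by $\eta_t^2 G^2$.

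The key step, which is the main obstacle, is bridging convexity and the QAT update. Convexity (Assumption \ref{a.f.1}) at the point $Q(\mathbf{w}_t)$ gives $\langle \nabla F(Q(\mathbf{w}_t)), Q(\mathbf{w}_t)-\mathbf{w}_*\rangle \ge F(Q(\mathbf{w}_t))-F(\mathbf{w}_*)$, but the inner product that appears naturally is with $\mathbf{w}_t-\mathbf{w}_*$. I would resolve this by splitting
\begin{align*}
\langle \nabla F(Q(\mathbf{w}_t)), \mathbf{w}_t-\mathbf{w}_*\rangle = \langle \nabla F(Q(\mathbf{w}_t)), Q(\mathbf{w}_t)-\mathbf{w}_*\rangle - \langle \nabla F(Q(\mathbf{w}_t)), r_Q(\mathbf{w}_t)\rangle,
\end{align*}
applying convexity to the first piece and Cauchy--Schwarz to the second. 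For the Cauchy--Schwarz bound, I would note that Jensen applied to Assumption \ref{a.f.4} gives $\|\nabla F(Q(\mathbf{w}_t))\|_2 \le G$, and Lemma \ref{l.r.1} together with Assumption \ref{a.q.1} yields $E\|r_Q(\mathbf{w}_t)\|_2 \le \sqrt{d}\,S$, so this residual contributes at most $G\sqrt{d}\,S$ (multiplied by $2\eta_t$ after the expansion).

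Substituting these bounds back produces
\begin{align*}
E\|\mathbf{w}_{t+1}-\mathbf{w}_*\|_2^2 \le E\|\mathbf{w}_t-\mathbf{w}_*\|_2^2 - 2\eta_t\, E[F(Q(\mathbf{w}_t))-F(\mathbf{w}_*)] + 2\eta_t G\sqrt{d}\,S + \eta_t^2 G^2 .
\end{align*}
Rearranging for the objective gap, dividing by $2\eta_t$, and plugging in the prescribed $\eta_t = 1/\sqrt{T}$ yields exactly the stated inequality, with the telescoping-friendly factor $\sqrt{T}/2$ in front of the squared-distance difference, the additive quantization-error term $G\sqrt{d}\,S$, and the variance term $G^2/(2\sqrt{T})$. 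The only subtlety worth double-checking is that the residual bound uses $E\|r_Q\|_2$ rather than $\|r_Q\|_2$ pointwise, which is fine because we are already inside an outer expectation after conditioning on $\mathbf{w}_t$.
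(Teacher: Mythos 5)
Your plan matches the paper's proof step for step: expand the squared distance, take conditional expectation to use unbiasedness and the $G^2$ second-moment bound, split $\langle \nabla F(Q(\mathbf{w}_t)), \mathbf{w}_t-\mathbf{w}_*\rangle$ into a convexity term at $Q(\mathbf{w}_t)$ and a residual term bounded by Cauchy--Schwarz plus Lemma~\ref{l.r.1}, then rearrange and substitute $\eta_t=1/\sqrt{T}$. The argument is correct and essentially identical to the paper's.
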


\begin{proof}

Based on the update $\mathbf{w}_{t+1} = \mathbf{w}_t -\eta_t \nabla F\left(Q\left(\mathbf{w}_t\right);\xi_t \right)$ in the $t$-th iteration of QAT training, 
\begin{align}
    E&\|\mathbf{w}_{t+1}-\mathbf{w}_{*}\|_2^2=E\|\mathbf{w}_t -\eta_t \nabla F\left(Q\left(\mathbf{w}_t\right);\xi_t \right)-\mathbf{w}_{*}\|^2_2\nonumber\\
    &=E\|\mathbf{w}_{t}-\mathbf{w}_{*}\|^2_2 - 2\eta_t E\left\langle \nabla F\left(Q\left(\mathbf{w}_t\right);\xi_t \right),\mathbf{w}_t-\mathbf{w}_{*}\right\rangle + \eta_t^2 E\|\nabla F\left(Q\left(\mathbf{w}_t\right);\xi_t \right)\|^2_2 \nonumber\\
    &\le E\|\mathbf{w}_{t}-\mathbf{w}_{*}\|^2_2 - 2\eta_t E\left\langle \nabla F\left(Q\left(\mathbf{w}_t\right);\xi_t \right),\mathbf{w}_t-\mathbf{w}_{*}\right\rangle + \eta_t^2 G^2\nonumber\\
    &= E\|\mathbf{w}_{t}-\mathbf{w}_{*}\|^2_2 - 2\eta_t E\left\langle \nabla F\left(Q\left(\mathbf{w}_t\right) \right),\mathbf{w}_t-\mathbf{w}_{*}\right\rangle + \eta_t^2 G^2\nonumber\\
    &= E\|\mathbf{w}_{t}-\mathbf{w}_{*}\|^2_2 - 2\eta_t E\left\langle \nabla F\left(Q\left(\mathbf{w}_t\right) \right),Q\left(\mathbf{w}_t\right)-\mathbf{w}_{*}\right\rangle- 2\eta_t E\left\langle \nabla F\left(Q\left(\mathbf{w}_t\right) \right),\mathbf{w}_t-Q\left(\mathbf{w}_t\right)\right\rangle + \eta_t^2 G^2\nonumber\\
    &\le E\|\mathbf{w}_{t}-\mathbf{w}_{*}\|^2_2 - 2\eta_tE\left(F\left(Q\left(\mathbf{w}_t\right)\right)-F\left(\mathbf{w}_{*}\right)\right)- 2\eta_t E\left\langle \nabla F\left(Q\left(\mathbf{w}_t\right) \right),\mathbf{w}_t-Q\left(\mathbf{w}_t\right)\right\rangle + \eta_t^2 G^2\nonumber\\
    &\le E\|\mathbf{w}_{t}-\mathbf{w}_{*}\|^2_2 - 2\eta_tE\left(F\left(Q\left(\mathbf{w}_t\right)\right)-F\left(\mathbf{w}_{*}\right)\right)+ 2\eta_t E\left\| \nabla F\left(Q\left(\mathbf{w}_t\right) \right)\right\|_2\left\|r\left(\mathbf{w}_{t}\right)\right\|_2 + \eta_t^2 G^2\nonumber\\    
    &\le E\|\mathbf{w}_{t}-\mathbf{w}_{*}\|^2_2 - 2\eta_tE\left(F\left(Q\left(\mathbf{w}_t\right)\right)-F\left(\mathbf{w}_{*}\right)\right)+ 2\eta_t G E\left\| r\left(\mathbf{w}_{t}\right)\right\|_2 + \eta_t^2 G^2\nonumber\\
    &\le E\|\mathbf{w}_{t}-\mathbf{w}_{*}\|^2_2 - 2\eta_tE\left(F\left(Q\left(\mathbf{w}_t\right)\right)-F\left(\mathbf{w}_{*}\right)\right)+ 2\eta_t G \sqrt{d}S + \eta_t^2 G^2\nonumber
\end{align}
where A.\ref{a.f.4}, A.\ref{a.f.1}, $\langle\mathbf{x}, \mathbf{y}\rangle \le \|\mathbf{x}\|_2\|\mathbf{y}\|_2$, A.\ref{a.f.4}, and Lemma \ref{l.r.1} are used respectively in the inequalities. We use the fact that gradients are unbiased as well, A.\ref{a.f.4}. Let $\eta_t=\frac{1}{\sqrt{T}}$. Note that the same rate can be obtained with a decreasing learning rate scheme with a couple extra steps. Rearranging the terms and dividing with the learning rate give the Lemma.
\end{proof}

By the telescoping sum of Lemma~\ref{l.r.2} over all iterations $t=1,\dotsc,T$, we can prove the convergence of QAT training. 

\begin{theorem}[QAT Convergence]\label{thm.qat}
    \textbf{For a convex
    function 
    with bounded unbiased stochastic gradients using a quantization method with bounded scales, we have
    $$E\left[F\left(Q\left(\mathbf{w}_\tau\right)\right)-F\left(\mathbf{w}_{*}\right)\right] = O\left(\frac{1}{\sqrt{T}}\left(\|\mathbf{w}_{1}-\mathbf{w}_{*}\|^2_2+G^2\right)+G\sqrt{d}S\right)$$
    where $\tau$ is a random variable that takes values in $\{1,2,\ldots,T\}$ with equal probability\footnote{We could further derive a model bound using Jensen on LHS since the function is convex. This allows us to avoid introducing another random variable $\tau$, and would give LHS as $E\left[F\left(\frac{1}{T}\sum_{t=1}^TQ\left(\mathbf{w}_t\right)\right)-F\left(\mathbf{w}_{*}\right)\right]$. However the model, $\frac{1}{T}\sum_{t=1}^TQ\left(\mathbf{w}_t\right)$, is not necessarily a quantized model. Since we are interested in quantized model performance, we further need to argue that the quantization error of the averaged model is small and that would not change the rate. To avoid these extra steps, we introduced another random variable, $\tau$, for the sake of simplicity in the proof.}, $T$ is the number of iterations, $\mathbf{w}_1$ is the initial model and $\mathbf{w}_*$ is the optimal model $\mathbf{w}_*\in\arg\min_{\mathbf{w}} F(\mathbf{w})$, and the remaining constants are defined in the assumptions.
    }
\end{theorem}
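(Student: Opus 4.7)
\textbf{Proof plan for Theorem~\ref{thm.qat}.} The plan is to turn the per-iteration descent inequality of Lemma~\ref{l.r.2} into a rate by telescoping, and then convert the uniform average over iterates into an expectation over the random index $\tau$. All of the heavy lifting (bounding the inner product with the quantized gradient via convexity, controlling the residual inner product via Cauchy--Schwarz and A.\ref{a.f.4}, and upper bounding $E\|r_Q(\mathbf{w}_t)\|_2$ by $\sqrt{d}S$ through Lemma~\ref{l.r.1}) has already been discharged inside Lemma~\ref{l.r.2}, so what remains is essentially a one-page aggregation argument.

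First, I would apply Lemma~\ref{l.r.2} with the fixed choice $\eta_t = 1/\sqrt{T}$ for every $t = 1, \dotsc, T$, obtaining
\begin{align*}
E\!\left[F(Q(\mathbf{w}_t)) - F(\mathbf{w}_*)\right] \le \tfrac{\sqrt{T}}{2}\bigl(E\|\mathbf{w}_t - \mathbf{w}_*\|_2^2 - E\|\mathbf{w}_{t+1} - \mathbf{w}_*\|_2^2\bigr) + G\sqrt{d}\,S + \tfrac{1}{2\sqrt{T}} G^2 .
\end{align*}
Then I would sum over $t = 1, \dotsc, T$. The first bracketed piece telescopes cleanly to $\tfrac{\sqrt{T}}{2}\bigl(\|\mathbf{w}_1 - \mathbf{w}_*\|_2^2 - E\|\mathbf{w}_{T+1} - \mathbf{w}_*\|_2^2\bigr)$, and I would drop the nonpositive tail term $-\tfrac{\sqrt{T}}{2} E\|\mathbf{w}_{T+1} - \mathbf{w}_*\|_2^2$ to keep only an upper bound. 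The remaining two constants contribute $T \cdot G\sqrt{d}\,S$ and $T \cdot \tfrac{G^2}{2\sqrt{T}} = \tfrac{\sqrt{T}}{2} G^2$, respectively.

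Next, I would divide both sides by $T$. On the left, $\tfrac{1}{T}\sum_{t=1}^T E[F(Q(\mathbf{w}_t)) - F(\mathbf{w}_*)]$ is, by definition of $\tau$ as a uniform index on $\{1,\dotsc,T\}$ (independent of the other randomness), exactly $E[F(Q(\mathbf{w}_\tau)) - F(\mathbf{w}_*)]$. On the right, the telescoping piece becomes $\tfrac{1}{2\sqrt{T}} \|\mathbf{w}_1 - \mathbf{w}_*\|_2^2$, the SGD-variance piece becomes $\tfrac{G^2}{2\sqrt{T}}$, and the quantization piece stays as $G\sqrt{d}\,S$ (this is the non-vanishing floor that reflects the residual QAT error). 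Collecting these three contributions yields the claimed $O(\tfrac{1}{\sqrt{T}}(\|\mathbf{w}_1 - \mathbf{w}_*\|_2^2 + G^2) + G\sqrt{d}\,S)$ rate.

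There is no genuine obstacle here beyond bookkeeping: the mildly delicate points are (i) making sure to introduce $\tau$ \emph{before} dividing, so the LHS is a bona fide expected optimality gap of a \emph{quantized} iterate $Q(\mathbf{w}_\tau)$ rather than of an average $\tfrac{1}{T}\sum_t Q(\mathbf{w}_t)$ that need not lie on the quantization grid (as noted in the footnote of the theorem), and (ii) verifying that the constant step $\eta_t = 1/\sqrt{T}$ is indeed admissible inside Lemma~\ref{l.r.2}; a decreasing schedule $\eta_t \propto 1/\sqrt{t}$ would give the same asymptotic rate via a standard $\sum_{t=1}^T 1/\sqrt{t} = \Theta(\sqrt{T})$ argument, but the constant-step version keeps the proof shortest.
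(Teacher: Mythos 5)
Your proposal is correct and follows essentially the same route as the paper's own proof: invoke Lemma~\ref{l.r.2} with the constant step $\eta_t = 1/\sqrt{T}$, telescope the squared-distance terms, drop the nonpositive tail, and divide by $T$, reading the resulting uniform average on the left as $E[F(Q(\mathbf{w}_\tau)) - F(\mathbf{w}_*)]$ with $\tau$ uniform on $\{1,\dotsc,T\}$. There is no substantive difference; the bookkeeping and the way the three terms $\tfrac{1}{\sqrt{T}}\|\mathbf{w}_1-\mathbf{w}_*\|_2^2$, $\tfrac{1}{\sqrt{T}}G^2$, and $G\sqrt{d}\,S$ emerge are exactly as in the paper.
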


\begin{proof}
If we average Lemma \ref{l.r.2} for all iterations we get,

\begin{align}
    E\left[\left[\frac{1}{T}\sum_{t=1}^{T}F\left(Q\left(\mathbf{w}_t\right)\right)\right]-F\left(\mathbf{w}_{*}\right)\right] &\leq \frac{1}{2\sqrt{T}}\left[-E\|\mathbf{w}_{T+1}-\mathbf{w}_{*}\|^2_2+ E\|\mathbf{w}_{1}-\mathbf{w}_{*}\|^2_2\right]+G\sqrt{d}S+ \frac{1}{2\sqrt{T}} G^2\nonumber\\
    &\leq \frac{1}{2\sqrt{T}} \|\mathbf{w}_{1}-\mathbf{w}_{*}\|^2_2+G\sqrt{d}S+ \frac{1}{2\sqrt{T}} G^2\nonumber
\end{align}

Note that 
LHS is the same if we choose $Q\left(\mathbf{w}_t\right)$ at random from all iterations with equal probability.
\end{proof}

\begin{remark}\label{r.1}
    Note that the proof uses a bound on the quantization error in the form of Lemma \ref{l.r.1}. Deterministic quantization would have a smaller bound on the norm of the quantization error, $E\left\|r_Q(w)\right\|_2$, compared to the stochastic quantization. This motivates the use of deterministic quantization during the training phase.
\end{remark}

\begin{remark}\label{r.2}
    LHS of the convergence rate in Theorem \ref{thm.qat} has two terms. First term decays with $O\left(\frac{1}{\sqrt{T}}\right)$ which is similar to the SGD rate. The second term is a constant. This constant term accounts for irreducible loss due to quantization.
\end{remark}

\section{Convergence Analysis of FP8FedAvg-UQ}\label{A.flq}

We note that $F_k$ is the local loss at device $k\in[K]$ and $F$ is the average of local functions, .i.e $F\left(\mathbf{x}\right)=\frac{1}{K}\sum_{k=1}^KF_k\left(\mathbf{w}\right)$. We assume the number of data points in each device is the same so that $F$ is a non-weighted average of local functions for the sake of simplicity. We note that results can be adjusted easily for non-equal dataset size cases. We denote $\mathbf{w}^*$ as the optimal model of the global loss, .i.e $\arg\min F\left(\mathbf{w}\right)$. For simplicity, we consider the balanced clients $n_k = \frac{n}{K}$ in our proof. However, the proof can be extended to the general imbalanced case similar to \cite{mcmahan2017communication}.

\begin{assumption}[Smoothness]\label{a.f.3}
We assume the functions are $L$ smooth.
$$\|\nabla F_k({\mathbf x}) - \nabla F_k({\mathbf y})\|_2 \le L \|{\mathbf x} - {\mathbf y}\|_2 \ \ \ \forall {\mathbf x}, {\mathbf y}, k.$$
\end{assumption}

\begin{property}\label{p.c.1}If we have smooth and convex functions, as in \cite{karimireddy2020scaffold,nesterov2018lectures,acar2021federated},
for all $\mathbf{w},\mathbf{x}, \mathbf{y}$,
$$-\left\langle \nabla F_k\left(\mathbf{w}\right), \mathbf{y} -\mathbf{x} \right\rangle\le - F_k\left(\mathbf{y}\right) + F_k\left(\mathbf{x}\right) + \frac{L}{2}\left\|\mathbf{y}-\mathbf{w}\right\|^2_2.$$
\end{property}

\subsection{Lemmas on the Stochastic Quantization for Model Communication}
\begin{lemma}\label{l.r.3.reset}
Stochastic quantization is unbiased, i.e.,
$$Er_{Q_{\text{rand}}}\left(\mathbf{x}\right)=\mathbf{0}.$$
\end{lemma}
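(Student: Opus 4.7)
The plan is to reduce the vector statement to a scalar per-coordinate calculation, since $Q_{\text{rand}}$ acts element-wise by Definition \ref{d.q.0}. Once I establish $E[Q_{\text{rand}}(x_i)] = x_i$ for an arbitrary component, linearity of expectation gives $E[Q_{\text{rand}}(\mathbf{x})] = \mathbf{x}$, whence $E[r_{Q_{\text{rand}}}(\mathbf{x})] = E[Q_{\text{rand}}(\mathbf{x})] - \mathbf{x} = \mathbf{0}$ by Definition \ref{d.q.1}.

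For the scalar step, I would fix a coordinate $x_i$ with scale $s_i$ and denote the fractional part $f_i = \frac{x_i}{s_i} - \floor*{\frac{x_i}{s_i}} \in [0,1)$. Taking $p$ uniform on $[0,1]$ as stated in Definition \ref{d.q.0}, the event $\{p \le f_i\}$ has probability exactly $f_i$, so $Q_{\text{rand}}(x_i)$ equals $s_i \ceil*{\frac{x_i}{s_i}}$ with probability $f_i$ and $s_i \floor*{\frac{x_i}{s_i}}$ with probability $1 - f_i$. Expanding this as a convex combination of the two nearest grid points and substituting $\ceil*{\frac{x_i}{s_i}} = \floor*{\frac{x_i}{s_i}} + 1$ (valid for $f_i > 0$; the boundary case $f_i = 0$ is trivial since both branches return $x_i$) collapses the expectation to $s_i\bigl(\floor*{\frac{x_i}{s_i}} + f_i\bigr) = x_i$.

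There is essentially no obstacle here; this is the textbook unbiasedness calculation for stochastic rounding on a fixed grid. The only mildly subtle point is that the grid spacing $s_i$ from \eqref{eq:scale} depends on $x_i$, but since $s_i$ is deterministic given $x_i$ the randomness in $p$ factors out cleanly and the identity still goes through coordinate-by-coordinate. The lemma is short by design; its role is to record the unbiasedness property that drives the subsequent FedAvg convergence argument (cf.\ Remark \ref{r.unbiased}), where the fact that quantization noise has zero mean lets cross-terms vanish in expectation when bounding $\|\mathbf{w}_{t+1} - \mathbf{w}_*\|_2^2$.
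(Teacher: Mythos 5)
Your proof matches the paper's essentially line for line: both expand $E[Q_{\text{rand}}(x)]$ as the convex combination $f \cdot s(\lfloor x/s\rfloor + 1) + (1-f)\cdot s\lfloor x/s\rfloor$ with $f$ the fractional part, and collapse it to $x$; the paper just writes this in vector form with an element-wise product $\odot$ rather than fixing a coordinate and invoking linearity. One small caveat: Definition \ref{d.q.0} in the appendix only says $p \in [0,1]$ is a random variable (and the main text even calls it Bernoulli), so the uniformity of $p$ on $[0,1]$ that your argument and the paper's computation both require is implicit rather than ``as stated'' --- worth flagging but not a gap in your reasoning.
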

\begin{proof}
It follows directly from the definition as,
\begin{align}
    Er_{Q_{\text{rand}}}\left(\mathbf{x}\right)&=EQ_{\text{rand}}\left(\mathbf{x}\right)-\mathbf{x}= s\left(\frac{{\mathbf x}}{s} - \left\lfloor\frac{{\mathbf x}}{s}\right\rfloor\right)\odot\left(\left\lfloor\frac{{\mathbf x}}{s}\right\rfloor+1\right) + s\left(1-\frac{{\mathbf x}}{s} + \left\lfloor\frac{{\mathbf x}}{s}\right\rfloor\right)\odot\left\lfloor\frac{{\mathbf x}}{s}\right\rfloor - \mathbf{x}=\mathbf{0}\nonumber
\end{align}
where $\odot$ denotes the element-wise product.
\end{proof}

\begin{lemma}\label{l.q.1}
Let $Q_\text{rand}$ be the stochastic unbiased quantization satisfying assumption \ref{a.q.1}
. Then we have,
$$E\left\|r_{Q_\text{rand}}\left({\mathbf x}\right)\right\|^2_2\le S\left\|{\mathbf x}\right\|_1\le S \sqrt{d} \left\|{\mathbf x}\right\|_2$$
\end{lemma}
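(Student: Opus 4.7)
The plan is to reduce the claim to a per-coordinate scalar bound and then sum. Because $\|r_{Q_{\text{rand}}}(\mathbf{x})\|_2^2 = \sum_{i=1}^d r_{Q_{\text{rand}}}(x_i)^2$ holds deterministically, linearity of expectation reduces the task to proving the scalar inequality $E[r_{Q_{\text{rand}}}(x_i)^2] \le S |x_i|$ for each coordinate. Summing over $i$ yields $E\|r_{Q_{\text{rand}}}(\mathbf{x})\|_2^2 \le S\|\mathbf{x}\|_1$, and the second inequality $\|\mathbf{x}\|_1 \le \sqrt{d}\|\mathbf{x}\|_2$ is the standard Cauchy-Schwarz bound applied to $\langle \mathbf{1}, |\mathbf{x}| \rangle$.

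For the scalar step, I would fix a coordinate, write $x = x_i$, $s = s_i$, and set $q = x/s - \lfloor x/s \rfloor \in [0,1)$. By Definition~\ref{d.q.0}, $Q_{\text{rand}}(x)$ equals $s(\lfloor x/s\rfloor + 1)$ with probability $q$ and $s\lfloor x/s\rfloor$ with probability $1-q$, producing per-coordinate errors $s(1-q)$ and $-sq$ respectively. A one-line computation gives
\begin{align*}
    E\bigl[r_{Q_{\text{rand}}}(x)^2\bigr] = q\cdot s^2(1-q)^2 + (1-q)\cdot s^2 q^2 = s^2 q(1-q).
\end{align*}
To convert this into a bound involving $|x|$, I would use the elementary estimate $q(1-q) \le \min(q, 1-q)$, which holds because both factors lie in $[0,1]$. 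This gives $s^2 q(1-q) \le s \cdot \min(sq, s(1-q))$. The right-hand factor is precisely the distance from $x$ to the nearer of its two adjacent grid points $s\lfloor x/s\rfloor$ and $s\lceil x/s\rceil$, and since $0$ itself lies on the grid $\{ks : k \in \mathbb{Z}\}$, this distance is at most $|x - 0| = |x|$. Combined with $s_i \le S$ from Assumption~\ref{a.q.1}, this yields $E[r_{Q_{\text{rand}}}(x_i)^2] \le S|x_i|$, as required.

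The only mildly non-routine step is the bound $s^2 q(1-q) \le s|x|$. The naive estimate $q(1-q) \le 1/4$ would give only $E[r_i^2] \le S^2/4$, aggregating to a dimension-dependent constant $dS^2/4$ that fails to vanish as $\mathbf{x} \to 0$ and would be inadequate for the downstream convergence analysis. The geometric observation that $0$ belongs to the grid is exactly what makes the quantization variance scale linearly in $|x|$ for small $x$, and this scaling is what allows the overall quantization error to be controlled by $\|\mathbf{x}\|_1$ rather than by a purely dimension-dependent quantity.
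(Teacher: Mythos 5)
Your proof is correct and takes the same route as the paper: compute the per-coordinate variance $s^2 q(1-q)$, bound it by $s^2\min(q,1-q)\le s|x|\le S|x|$, sum over coordinates, and close with Cauchy--Schwarz for $\|\mathbf{x}\|_1\le\sqrt{d}\|\mathbf{x}\|_2$. Your geometric justification for $\min(q,1-q)\le|x/s|$ (zero lies on the grid $\{ks:k\in\mathbb{Z}\}$, so the nearest-grid-point distance is at most $|x|$) is actually clearer than the paper's terse remark, but it is the same inequality.
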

\begin{proof}
Let's start with a scalar case and we extend it to a vector case.
\begin{align}
E\left|r_{Q_\text{rand}}\left({x}\right)\right|^2
&=s^2\left(\frac{x}{s} - \left\lfloor\frac{x}{s}\right\rfloor\right)\left(\left\lfloor\frac{x}{s}\right\rfloor+1-\frac{x}{s}\right)^2 + s^2\left(1-\frac{x}{s} + \left\lfloor\frac{x}{s}\right\rfloor\right)\left(\left\lfloor\frac{x}{s}\right\rfloor-\frac{x}{s}\right)^2\nonumber\\
&=s^2\left(\frac{x}{s} - \left\lfloor\frac{x}{s}\right\rfloor\right)\left(1+\left\lfloor\frac{x}{s}\right\rfloor-\frac{x}{s}\right)\le s^2\min\left(\frac{x}{s} - \left\lfloor\frac{x}{s}\right\rfloor, 1-\frac{x}{s} + \left\lfloor\frac{x}{s}\right\rfloor\right)\nonumber\\
&\le s^2\left|\frac{x}{s} \right| \le S |x|\nonumber
\end{align}
where inequalities follow from the fact that $\frac{x}{s} - \left\lfloor\frac{x}{s}\right\rfloor\le 1$.

We can add the scalar variances to bound a vector variance as,

\begin{align}
E\left\|r_{Q_\text{rand}}\left({\mathbf x}\right)\right\|^2_2=\sum_{i\in[d]} E\left\|r_{Q_\text{rand}}\left(\left[{\mathbf x}\right]_i\right)\right\|^2_2\le S \sum_{i\in[d]}\left|\left[{\mathbf x}\right]_i\right|=S\left\|{\mathbf x}\right\|_1\le S \sqrt{d} \left\|{\mathbf x}\right\|_2\nonumber
\end{align}
where we use Cauchy–Schwarz inequality in the last step.
\end{proof}

\begin{lemma}[Quantization Error Decomposition]\label{l.q.2}

Let assumption \ref{a.q.1} holds. Both uniform and FP8 quantization satisfies,
$$E\left|r_Q\left(Q\left({x}\right)+{y}\right)\right|^2\le S \left|{y}\right|.$$
For $d$ dimensional vectors we get,
$$E\left\|r_Q\left(Q\left(\mathbf{x}\right)+\mathbf{y}\right)\right\|^2_2\le S \sqrt{d}\left\|\mathbf{y}\right\|_2.$$
\end{lemma}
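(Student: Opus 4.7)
The plan is to reduce the claim to a single scalar identity for stochastic rounding plus a short geometric observation about the FP8/uniform grid, and then tensorize to the vector statement. First I would record the variance formula for stochastic rounding at a scalar $z$ with local scale $s(z)$. Writing $Q^-(z) := s(z)\lfloor z/s(z)\rfloor$ and $Q^+(z) := s(z)\lceil z/s(z)\rceil$ for the two adjacent grid points, the same expectation calculation used implicitly in Lemma~\ref{l.q.1} gives
\begin{align*}
E\bigl|r_{Q_{\text{rand}}}(z)\bigr|^2 \;=\; (z - Q^-(z))\,(Q^+(z) - z).
\end{align*}
Upper-bounding the larger of the two factors by $Q^+(z) - Q^-(z) = s(z) \le S$ yields $E|r_{Q_{\text{rand}}}(z)|^2 \le s(z)\cdot d(z)$, where $d(z) := \min(z - Q^-(z),\, Q^+(z) - z)$ is the distance from $z$ to the nearest grid point at scale $s(z)$. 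Hence it suffices to show $d(z) \le |y|$ when $z = Q(x) + y$.

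The geometric step I would prove next is: the closed interval $I := [\min(Q(x), z),\, \max(Q(x), z)]$ always contains a grid point at scale $s(z)$. Given this, $d(z)$ is bounded by the distance from $z$ to that point, which is at most $|I| = |y|$, closing the scalar case. For uniform quantization the claim is immediate since $Q(x)$ itself is an integer multiple of the single scale $s$ and lies in $I$. For FP8 I would split on whether $Q(x)$ and $z$ lie in the same binade. If they do, then by \eqref{eq:scale} $s(Q(x)) = s(z)$, so $Q(x)$ is a grid point at scale $s(z)$ and we are done. If they lie in different binades, then $I$ straddles at least one binade boundary $\pm 2^{e-b}$; by the dyadic structure of FP8 (scales are always powers of two, and $\log_2 s(z) = e' - b - m$ for the binade $e'$ of $z$), every such binade boundary is an integer multiple of $s(z)$, hence a grid point at scale $s(z)$. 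The sign-change case $Q(x)\cdot z \le 0$ is handled by letting $0$ play the role of the interior grid point.

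Combining these two observations gives the scalar bound $E|r_Q(Q(x) + y)|^2 \le s(z)\,|y| \le S|y|$. The vector statement then follows coordinatewise: since stochastic quantization acts independently on each coordinate,
\begin{align*}
E\|r_Q(Q(\mathbf{x}) + \mathbf{y})\|_2^2 \;=\; \sum_{i=1}^d E\bigl|r_Q(Q([\mathbf{x}]_i) + [\mathbf{y}]_i)\bigr|^2 \;\le\; S \sum_{i=1}^d \bigl|[\mathbf{y}]_i\bigr| \;=\; S\,\|\mathbf{y}\|_1,
\end{align*}
and a final Cauchy--Schwarz step $\|\mathbf{y}\|_1 \le \sqrt{d}\,\|\mathbf{y}\|_2$, identical to the last step of Lemma~\ref{l.q.1}, produces the stated vector bound.

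The main obstacle I anticipate is the FP8 binade-boundary verification. One must track that scales are always powers of two (so the ratio $s(z)/s(Q(x))$ is dyadic), and be careful that a binade boundary $2^{e-b}$ simultaneously belongs to two binades with two different admissible scales; the argument has to use $s(z)$, the scale determined by the binade containing $z$ via \eqref{eq:scale}, and then verify that the boundary is an integer multiple of \emph{that} particular scale. Once this dyadic-interval observation is in place, the remaining steps are essentially the same algebraic manipulation as in Lemma~\ref{l.q.1}.
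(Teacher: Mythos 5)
Your proposal is correct and rests on the same key inequality as the paper's proof, namely the bound $E\left|r_{Q_{\text{rand}}}(z)\right|^2 = s^2 q_z(1-q_z) \le s \cdot \min\bigl(s q_z,\; s(1-q_z)\bigr)$, but it packages the remaining step differently. The paper sets $Q(x)=g_i$, writes $z = g_j + \delta$ with $0\le\delta < g_{j+1}-g_j$, and then argues by a three-way case split on the relation between the grid indices $i$ and $j$ (using the monotone bin-size property $g_{i+1}-g_i \ge g_i - g_{i-1}$ in the case $i>j+1$) that the fractional part $q_\delta$ can be controlled by $|y|$. You instead observe directly that $s\cdot\min(q_z,1-q_z) = d(z)$ is the distance from $z$ to the nearest multiple of $s(z)$, and reduce the whole lemma to the geometric claim that the interval between $Q(x)$ and $z$ contains a multiple of $s(z)$ — which you then verify via the binade structure of FP8 (same-binade versus boundary-crossing, with $0$ handling the sign change). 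This is a cleaner, more unified way of establishing the same fact, and it makes the structural requirement on the grid (that binade boundaries are integer multiples of the adjacent finer scale, which is precisely the $g_i \% (g_{i+1}-g_i)=0$ condition the paper states) more transparent; the paper's case analysis in exchange stays entirely at the level of grid indices and avoids having to reason explicitly about binades. Your tensorization step and the final Cauchy--Schwarz are identical to the paper's. One small note: you do not actually need independence across coordinates in the last step — linearity of expectation on $\|r\|_2^2 = \sum_i r_i^2$ already suffices — but this does not affect correctness.
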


\begin{proof}
We give a proof for scalar case. Vector version comes from upper bounding scalar case using Cauchy-Schwarz. Note that variance is higher for randomized quantization so let's prove the bound for $Q_{\text{rand}}$. Due to symmetry, we can assume $Q_{\text{rand}}\left({ x}\right)\ge { 0}$.

Let's define grid points as $g_i$ where $g_0=0$ and $g_{i+1}>g_i$. Note that due to quantization definitions, we have $g_i \% (g_{i+1}-g_{i}) = 0$, .i.e $\exists k \in \mathcal{Z}^+$ such that $g_i = k \left(g_{i+1}-g_{i}\right)$. Furthermore, we have a finer resolution close to $0$, .i.e $g_{i+1}-g_{i} \ge g_{i}-g_{i-1}$.

We extensively use a step in Lemma \ref{l.q.1} as,
$$E\left|r_{Q_\text{rand}}\left(z\right)\right|^2=s^2q_z\left(1-q_z\right)\le s^2\min\left(q_z,1-q_z\right)\le s^2 \left|\frac{z}{s}\right|=s|z|$$
where $q_z=\frac{z}{s}-\left\lfloor\frac{z}{s}\right\rfloor$. We use this relation by plugging in $z=Q_\text{rand}\left({x}\right) + y$ and investigating $q_{Q_\text{rand}\left({x}\right) + y}$.

Since $Q_{\text{rand}}\left({x}\right)$ is already quantized, $\exists i\ge0$ such that $Q_{\text{rand}}\left({x}\right)=g_i$. Let $g_{j+1}> Q_\text{rand}\left({x}\right) + y \ge g_j$. 

Let $y = \delta + g_j - g_i$. Then we have $g_{j+1}> \delta + g_j \ge g_j \implies g_{j+1} - g_j> \delta \ge 0$. We also know $g_{j+1}-g_i> y \ge g_j-g_i$.

We have, by definition,

$$q_{Q_\text{rand}\left({x}\right) + y}=\frac{g_j + \delta}{g_{j+1} - g_j}-\left\lfloor\frac{g_j + \delta}{g_{j+1} - g_j}\right\rfloor=\frac{\delta}{g_{j+1} - g_j}-\left\lfloor\frac{\delta}{g_{j+1} - g_j}\right\rfloor=q_{\delta}$$

since $g_j$ is a multiple $g_{j+1} - g_j$.

Let's look at different cases.

{\noindent \it Case $i\le j$}

Note that $ g_j - g_i \ge 0$ so that $|y|=|\delta + g_j - g_i|\ge|\delta|$. 
Then, we have,
$$E\left|r_{Q_\text{rand}}\left(Q\left({x}\right) + y\right)\right|^2\le \left(g_{j+1} - g_j\right)^2\min\left(q_{ \delta},1-q_{\delta}\right)\leq \left(g_{j+1} - g_j\right)|\delta|\leq S|\delta|\leq S|y|\qed.$$

{\noindent \it Case $i>j+1$}

Note that $ g_{j+1} - g_i < 0$ and $y$ is negative. Let's look at magnitude of $y$ and $\delta$.
$$0 > g_{j+1}-g_i> y \ge g_j-g_i \implies |y|> g_i - g_{j+1} \ge g_{j+2} - g_{j+1}.$$
We already know that $g_{j+1} - g_j> \delta \ge 0$. Then we have,
$$|y|> g_{j+2} - g_{j+1} \ge g_{j+1} - g_j> \delta $$
Since $|y|>|\delta|$, we get,
$$E\left|r_{Q_\text{rand}}\left(Q\left({x}\right) + y\right)\right|^2\le \left(g_{j+1} - g_j\right)^2\min\left(q_{ \delta},1-q_{\delta}\right)\leq \left(g_{j+1} - g_j\right)|\delta|\leq S|\delta|\leq S|y|\qed.$$

{\noindent \it Case $i=j+1$}

We have $y = \delta + g_j - g_i = \delta -\left(g_{j+1} - g_j\right)$. Let's look at $q_\delta$ as,
$$q_{\delta}=\frac{\delta}{g_{j+1} - g_j}-\left\lfloor\frac{\delta}{g_{j+1} - g_j}\right\rfloor=\frac{\delta-\left(g_{j+1} - g_j\right)}{g_{j+1} - g_j}-\left\lfloor\frac{\delta-\left(g_{j+1} - g_j\right)}{g_{j+1} - g_j}\right\rfloor=\frac{y}{g_{j+1} - g_j}-\left\lfloor\frac{y}{g_{j+1} - g_j}\right\rfloor=q_y$$
Then we have,
\begin{align*}
E\left|r_{Q_\text{rand}}\left(Q\left({x}\right) + y\right)\right|^2\le &\left(g_{j+1} - g_j\right)^2\min\left(q_{ \delta},1-q_{\delta}\right)\\
=&\left(g_{j+1} - g_j\right)^2\min\left(q_{y},1-q_{y}\right)\leq \left(g_{j+1} - g_j\right)|y|\leq S|y|    
\end{align*}

\end{proof}
Please note that the above proof holds for any quantization scheme of which the grid is symmetric with respect to zero and the bin size increases monotonically going from zero to plus or minus infinity. The FP8 quantization obviously satisfies this condition.

\subsection{Lemma on a Single Communication Round} \label{single-round}

We define some useful quantities. 
For simplicity in the proof, let us define auxiliary models as,
$${\mathbf{v}}_{t,u+1}^k = {\mathbf{v}}_{t,u}^k- \eta_t \nabla F_k\left(Q_{\text{det}}\left({\mathbf{v}}_{t,u}^k\right);\xi_{t,u}^k \right)\ \  \forall u\in[U],\ \ \ \ 
    {\mathbf{v}}_{t,1}^k = Q_{\text{rand}}\left({\mathbf{w}}_{t}\right)$$

where $U$ is the total number of local updates per communication round per device. Furthermore, we can unroll the recursion as,
\begin{align*}
{\mathbf{v}}_{t,U+1}^k &= {\mathbf{v}}_{t,U}^k- \eta_t \nabla F_k\left(Q_{\text{det}}\left({\mathbf{v}}_{t,U}^k\right);\xi_{t,U}^k \right)\\
&={\mathbf{v}}_{t,UE-1}^{k}- \eta_t \nabla F_k\left(Q_{\text{det}}\left({\mathbf{v}}_{t,U-1}^k\right);\xi_{t,U-1}^k \right)- \eta_t \nabla F_k\left(Q_{\text{det}}\left({\mathbf{v}}_{t,U}^k\right);\xi_{t,U}^k \right)=\ldots\\
&=Q_{\text{rand}}\left({\mathbf{w}}_{t}\right)-\eta_t\sum_{u\in[U]}\nabla F_k\left(Q_{\text{det}}\left({\mathbf{v}}_{t,u}^k\right);\xi_{t,u}^k \right)    
\end{align*}

It is clear to see that ${\mathbf{w}}_{t+1}^k={\mathbf{v}}_{t,U+1}^k$ for active devices. Let's define inactive device ${\mathbf{w}}_{t+1}^k={\mathbf{v}}_{t,U+1}^k$ as well. Note that this is just for notation and the algorithm is unchanged. Because if $k$ is not active we do not use ${\mathbf{w}}_{t+1}^k$ in our algorithm. Let us define a drift quantity similar to \cite{karimireddy2020scaffold}.
\begin{align}
    V_t=\frac{1}{KU}\sum_{k\in[K]}\sum_{u\in[U]}E\left\|Q_{\text{rand}}\left({\mathbf{w}}_{t}\right)-Q_{\text{det}}\left({\mathbf{v}}_{t,u}^k\right)\right\|^2_2.
\end{align}
Note that if local models diverge, we get a higher $V_t$. We can obtain the following lemma for a single communication round of the FP8FedAvg-UQ algorithm. 

\begin{lemma}\label{l.fl.1}
If assumptions \ref{a.f.1}, \ref{a.f.4}, \ref{a.q.1},  
\ref{a.f.3} hold and we use an unbiased quantization for model transmission, we have,
\begin{align}
  E\|\mathbf{w}_{t+1}-\mathbf{w}_{*}\|^2_2\le& E\left\|{\mathbf{w}}_{t}-\mathbf{w}_{*} \right\|^2_2-2U\eta_tE\left(F\left(Q_{\text{rand}}\left({\mathbf{w}}_{t}\right)\right)-F\left(\mathbf{w}_{*}\right)\right)\nonumber\\
  &+\eta_tL U V_t+2S\sqrt{d} G U \eta_t+\eta_t^2U^2G^2\label{A.1}\\
  V_t \le& 18U^3S\sqrt{d} G \eta_t  + 9U^2\eta_t^2G^2\label{A.2}
\end{align}
\end{lemma}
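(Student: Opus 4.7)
\textbf{Plan for \eqref{A.1}.} I would decompose the progress through the virtual average $\bar{\mathbf{v}}_{t,U+1} = \frac{1}{K}\sum_k \mathbf{v}^k_{t,U+1}$, which satisfies $E[\mathbf{w}_{t+1}\mid\bar{\mathbf{v}}_{t,U+1}] = \bar{\mathbf{v}}_{t,U+1}$ by unbiasedness of the server-side $Q_{\text{rand}}$ (Lemma~\ref{l.r.3.reset}). This gives a clean bias/variance split
\begin{align*}
E\|\mathbf{w}_{t+1} - \mathbf{w}_*\|_2^2 = E\|\mathbf{w}_{t+1} - \bar{\mathbf{v}}_{t,U+1}\|_2^2 + E\|\bar{\mathbf{v}}_{t,U+1} - \mathbf{w}_*\|_2^2.
\end{align*}
For the first term, each $\mathbf{v}^k_{t,U+1} = Q_{\text{rand}}(\mathbf{w}_t) + y^k$ is exactly a grid point plus a perturbation with $E\|y^k\|_2 \le \eta_t U G$ (by Assumption~\ref{a.f.4} and the triangle inequality), so Lemma~\ref{l.q.2} followed by Jensen's inequality for the average over $k$ bounds it by $O(S\sqrt{d}\,\eta_t U G)$. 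For the second term, I substitute the unrolled recursion $\bar{\mathbf{v}}_{t,U+1} = Q_{\text{rand}}(\mathbf{w}_t) - \eta_t \frac{1}{K}\sum_k\sum_u \nabla F_k(Q_{\text{det}}(\mathbf{v}^k_{t,u});\xi^k_{t,u})$ and expand the square. The squared norm of the aggregate gradient is $O(\eta_t^2 U^2 G^2)$ via Cauchy-Schwarz and bounded gradients; after taking expectation to kill stochastic-gradient noise, the cross term is converted via Property~\ref{p.c.1} applied at $\mathbf{w} = Q_{\text{det}}(\mathbf{v}^k_{t,u})$, $\mathbf{y} = Q_{\text{rand}}(\mathbf{w}_t)$, $\mathbf{x} = \mathbf{w}_*$ into $-2U\eta_t\,E[F(Q_{\text{rand}}(\mathbf{w}_t)) - F(\mathbf{w}_*)] + \eta_t L U V_t$, which matches the definition of $V_t$ exactly. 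The last step uses unbiasedness of $Q_{\text{rand}}(\mathbf{w}_t)$ again to translate $E\|Q_{\text{rand}}(\mathbf{w}_t) - \mathbf{w}_*\|_2^2$ into $E\|\mathbf{w}_t - \mathbf{w}_*\|_2^2$, with the extra quantization variance absorbed into the $2S\sqrt{d}GU\eta_t$ term.

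\textbf{Plan for \eqref{A.2}.} Since $\mathbf{v}^k_{t,u}$ is obtained from $Q_{\text{rand}}(\mathbf{w}_t)$ by a partial sum of at most $u-1$ stochastic gradients, I would use
\begin{align*}
\|Q_{\text{det}}(\mathbf{v}^k_{t,u}) - Q_{\text{rand}}(\mathbf{w}_t)\|_2^2 \le 2\|Q_{\text{det}}(\mathbf{v}^k_{t,u}) - \mathbf{v}^k_{t,u}\|_2^2 + 2\|\mathbf{v}^k_{t,u} - Q_{\text{rand}}(\mathbf{w}_t)\|_2^2.
\end{align*}
The first piece is a single quantization error whose expectation is bounded by $O(S\sqrt{d}\,\eta_t u G)$ via Lemma~\ref{l.q.2} (base point $Q_{\text{rand}}(\mathbf{w}_t)$, perturbation $-\eta_t\sum_{u'<u}\nabla F_k(\cdots)$). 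The second piece is the squared local drift, which is at most $O(\eta_t^2 u^2 G^2)$ by Cauchy-Schwarz and Assumption~\ref{a.f.4}. Summing over $u \in [U]$ and $k \in [K]$ and dividing by $KU$ gives $V_t$ bounded by $O(U^3 S\sqrt{d}\eta_t G + U^2 \eta_t^2 G^2)$, which matches \eqref{A.2} up to absolute constants.

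\textbf{The main obstacle.} The tightest place in the argument is the cross-term conversion in \eqref{A.1}: the stochastic gradient is evaluated at $Q_{\text{det}}(\mathbf{v}^k_{t,u})$, but the progress must be anchored at $Q_{\text{rand}}(\mathbf{w}_t)$ for the resulting functional term to match the one in the lemma's statement, and then the smoothness residual from Property~\ref{p.c.1} is forced to be exactly the $V_t$ appearing in the drift definition. This couples \eqref{A.1} to \eqref{A.2}, and the coupling only unwinds cleanly because every iterate admits the representation ``grid point $+$ perturbation of norm at most $\eta_t U G$'', which lets Lemma~\ref{l.q.2} control the quantization variance \emph{linearly} in $\eta_t$---a much sharper bound than the generic $S\sqrt{d}\|\mathbf{x}\|_2$ of Lemma~\ref{l.q.1}---and is what ultimately keeps the algorithmic error $O(1/\sqrt{T})$ after telescoping over $T$ communication rounds.
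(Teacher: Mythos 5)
Your proof is a genuinely different route from the paper's, and it is essentially sound, but it has one real omission in part \eqref{A.1} and one arithmetic claim worth correcting in part \eqref{A.2}.

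For \eqref{A.1}, the paper does not pass through the virtual average $\bar{\mathbf{v}}_{t,U+1}$: it applies Jensen's inequality directly to $\|\frac{1}{P}\sum_{k\in\mathcal{P}_t}Q_{\text{rand}}(\mathbf{w}^k_{t+1})-\mathbf{w}_*\|^2_2$, which yields $\frac{1}{K}\sum_k E\|Q_{\text{rand}}(\mathbf{w}^k_{t+1})-\mathbf{w}_*\|^2_2$ and in one stroke hides the client-subsampling variance inside the per-client bound. Your Pythagoras decomposition $E\|\mathbf{w}_{t+1}-\mathbf{w}_*\|^2_2 = E\|\mathbf{w}_{t+1}-\bar{\mathbf{v}}_{t,U+1}\|^2_2 + E\|\bar{\mathbf{v}}_{t,U+1}-\mathbf{w}_*\|^2_2$ is valid (conditional on the local models, $\mathbf{w}_{t+1}$ is an unbiased estimate of $\bar{\mathbf{v}}_{t,U+1}$ because both the quantization and the uniform subsampling are unbiased), and it is in fact tighter. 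However, the first term then contains \emph{two} independent sources of variance: the quantization noise $\frac{1}{P}\sum_{k\in\mathcal{P}_t}r_{Q_{\text{rand}}}(\mathbf{w}^k_{t+1})$, which you bound via Lemma~\ref{l.q.2}, and the client-subsampling deviation $\frac{1}{P}\sum_{k\in\mathcal{P}_t}\mathbf{w}^k_{t+1}-\bar{\mathbf{v}}_{t,U+1}$, which you never mention. The latter vanishes only under full participation; in general it must be bounded separately, e.g.\ by noting that all $\mathbf{w}^k_{t+1}$ share the base point $Q_{\text{rand}}(\mathbf{w}_t)$ and hence $\|\mathbf{w}^k_{t+1}-\bar{\mathbf{v}}_{t,U+1}\|_2\le 2\eta_t U G$, giving an extra $O(\eta_t^2U^2G^2)$ that can be absorbed into the last term of \eqref{A.1}. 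Without this step your bound on the first Pythagoras term is simply wrong for $P<K$. Beyond that, the remainder of your \eqref{A.1} plan — unrolling $\bar{\mathbf{v}}_{t,U+1}$, killing the gradient noise by unbiasedness, applying Property~\ref{p.c.1} at $\mathbf{w}=Q_{\text{det}}(\mathbf{v}^k_{t,u})$, $\mathbf{y}=Q_{\text{rand}}(\mathbf{w}_t)$, $\mathbf{x}=\mathbf{w}_*$ to produce exactly $V_t$, and converting $E\|Q_{\text{rand}}(\mathbf{w}_t)-\mathbf{w}_*\|^2_2$ into $E\|\mathbf{w}_t-\mathbf{w}_*\|^2_2$ via unbiasedness — matches the paper.

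For \eqref{A.2}, your one-shot decomposition $\|Q_{\text{det}}(\mathbf{v}^k_{t,u})-Q_{\text{rand}}(\mathbf{w}_t)\|^2_2 \le 2\|r_{Q_{\text{det}}}(\mathbf{v}^k_{t,u})\|^2_2 + 2\|\mathbf{v}^k_{t,u}-Q_{\text{rand}}(\mathbf{w}_t)\|^2_2$ is simpler and cleaner than the paper's route, which sets up a recursion with a $\frac{U}{U-1}$ amplification factor and controls the resulting geometric sum by $3U$. Moreover, you undersell your own argument: averaging $2S\sqrt{d}\eta_t(u-1)G + 2\eta_t^2(u-1)^2G^2$ over $u\in[U]$ actually yields $V_t \le S\sqrt{d}\eta_t U G + \tfrac{2}{3}\eta_t^2 U^2 G^2$, i.e.\ $O(U S\sqrt{d}\eta_t G + U^2\eta_t^2 G^2)$, which is a factor of $U^2$ better than the paper's $O(U^3 S\sqrt{d}\eta_t G + U^2\eta_t^2G^2)$ in the first term, not merely matching it as you claim. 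Since your bound implies \eqref{A.2}, the lemma still follows, but you should state the improvement rather than pretending to recover the paper's constants. (The one technicality you must keep in mind, which you do implicitly, is that Lemma~\ref{l.q.2} applies because $Q_{\text{rand}}(\mathbf{w}_t)$ is a grid point for $Q_{\text{det}}$, so $\mathbf{v}^k_{t,u}$ is of the required grid-plus-perturbation form.)
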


\begin{proof}
First, we prove Eq. \ref{A.1}. 
Due to the model-to-server communication and the model aggregation on the server in the $t$-th round, we have
\begin{align}
    E\left\|\mathbf{w}_{t+1}-\mathbf{w}_{*}\right\|^2_2=&E\left\|\frac{1}{P}\sum_{k\in\mathcal{P}_t}Q_{\text{rand}}\left({\mathbf{w}}_{t+1}^k\right)-\mathbf{w}_{*}\right\|^2_2\le \frac{1}{P}E \sum_{k\in\mathcal{P}_t} \left\|Q_{\text{rand}}\left({\mathbf{w}}_{t+1}^k\right)-\mathbf{w}_{*}\right\|^2_2 \nonumber\\
    =& \frac{1}{K}\sum_{k\in[K]} E\left\|Q_{\text{rand}}\left({\mathbf{w}}_{t+1}^k\right)-\mathbf{w}_{*}\right\|^2_2\nonumber
\end{align}
where we use definition of $\mathbf{w}_{t+1}$ and triangular inequality ($\left\|\sum_{n\in[N]}a_n\right\|^2\le N \sum_{n\in[N]}\left\|a_n\right\|^2$). Lastly, we use the fact that active devices are sampled uniformly at random so that each device has an activation probability of $\frac{P}{K}$. Let's continue as
\begin{align}
    E\left\|\mathbf{w}_{t+1}-\mathbf{w}_{*}\right\|^2_2\le&\frac{1}{K} \sum_{k\in[K]} E\left\|Q_{\text{rand}}\left({\mathbf{w}}_{t+1}^k\right)-\mathbf{w}_{*}\right\|^2_2=\frac{1}{K} \sum_{k\in[K]} E\left\|r_{Q_\text{rand}}\left({\mathbf{w}}_{t+1}^k\right)+{\mathbf{w}}_{t+1}^k-\mathbf{w}_{*}\right\|^2_2\nonumber\\
    =&\frac{1}{K} \left(\sum_{k\in[K]} E\left\|r_{Q_\text{rand}}\left({\mathbf{w}}_{t+1}^k\right)\right\|^2_2+2E\left\langle r_{Q_\text{rand}}\left({\mathbf{w}}_{t+1}^k\right),{\mathbf{w}}_{t+1}^k-\mathbf{w}_{*}\right\rangle+E\left\|{\mathbf{w}}_{t+1}^k-\mathbf{w}_{*}\right\|^2_2\right)\nonumber\\
    =&\frac{1}{K} \left(\sum_{k\in[K]} E\left\|r_{Q_\text{rand}}\left({\mathbf{w}}_{t+1}^k\right)\right\|^2_2+E\left\|{\mathbf{w}}_{t+1}^k-\mathbf{w}_{*}\right\|^2_2\right)\nonumber\\
    \nonumber
\end{align}
where we use the fact that $Q_\text{rand}$ is an unbiased quantizer. Let's bound $E\left\|{\mathbf{w}}_{t+1}^k-\mathbf{w}_{*}\right\|^2$ as
\begin{align*}
E&\left\|{\mathbf{w}}_{t+1}^k-\mathbf{w}_{*}\right\|^2_2 =  E\left\|Q_{\text{rand}}\left({\mathbf{w}}_{t}\right)-\mathbf{w}_{*}-\eta_t\sum_{u\in[U]}\nabla F_k\left(Q_{\text{det}}\left({\mathbf{v}}_{t,u}^k\right);\xi_{t,u}^k \right) \right\|^2_2\nonumber\\
=& E\left\|Q_{\text{rand}}\left({\mathbf{w}}_{t}\right)-\mathbf{w}_{*} \right\|^2_2-2\eta_t\sum_{u\in[U]}E\left\langle Q_{\text{rand}}\left({\mathbf{w}}_{t}\right)-\mathbf{w}_{*},\nabla F_k\left(Q_{\text{det}}\left({\mathbf{v}}_{t,u}^k\right);\xi_{t,u}^k \right) \right\rangle\\
&+\eta_t^2E\left\|\sum_{u\in[U]}\nabla F_k\left(Q_{\text{det}}\left({\mathbf{v}}_{t,u}^k\right);\xi_{t,u}^k \right) \right\|^2_2\\
\le& E\left\|Q_{\text{rand}}\left({\mathbf{w}}_{t}\right)-\mathbf{w}_{*} \right\|^2_2-2\eta_t\sum_{u\in[U]}E\left\langle Q_{\text{rand}}\left({\mathbf{w}}_{t}\right)-\mathbf{w}_{*},\nabla F_k\left(Q_{\text{det}}\left({\mathbf{v}}_{t,u}^k\right);\xi_{t,u}^k \right) \right\rangle+\eta_t^2U^2G^2\\
=& E\left\|Q_{\text{rand}}\left({\mathbf{w}}_{t}\right)-\mathbf{w}_{*} \right\|^2_2-2\eta_t\sum_{u\in[U]}E\left\langle Q_{\text{rand}}\left({\mathbf{w}}_{t}\right)-\mathbf{w}_{*},\nabla F_k\left(Q_{\text{det}}\left({\mathbf{v}}_{t,u}^k\right) \right) \right\rangle+\eta_t^2U^2G^2\\
\le& E\left\|Q_{\text{rand}}\left({\mathbf{w}}_{t}\right)-\mathbf{w}_{*} \right\|^2_2+2\eta_t\sum_{u\in[U]}E\left[-F_k\left(Q_{\text{rand}}\left({\mathbf{w}}_{t}\right)\right)+F_k\left(\mathbf{w}_{*}\right)\right]\\
&+\eta_tL\sum_{u\in[U]}\left\|Q_{\text{rand}}\left({\mathbf{w}}_{t}\right)-Q_{\text{det}}\left({\mathbf{v}}_{t,u}^k\right)\right\|^2+\eta_t^2U^2G^2\\
=& E\left\|Q_{\text{rand}}\left({\mathbf{w}}_{t}\right)-\mathbf{w}_{*} \right\|^2_2-2U\eta_tE\left(F_k\left(Q_{\text{rand}}\left({\mathbf{w}}_{t}\right)\right)-F_k\left(\mathbf{w}_{*}\right)\right)\\
&+\eta_tL\sum_{u\in[U]}\left\|Q_{\text{rand}}\left({\mathbf{w}}_{t}\right)-Q_{\text{det}}\left({\mathbf{v}}_{t,u}^k\right)\right\|^2_2+\eta_t^2U^2G^2
\end{align*}
where we use the fact that gradients are bounded, $\nabla F_k\left(Q_{\text{det}}\left({\mathbf{v}}_{t,u}^k\right);\xi_{t,u}^k \right)$ is an unbiased gradient estimate and property \ref{p.c.1}. We further restate $E\left\|Q_{\text{rand}}\left({\mathbf{w}}_{t}\right)-\mathbf{w}_{*} \right\|^2_2$ as,
\begin{align*}
    E\left\|Q_{\text{rand}}\left({\mathbf{w}}_{t}\right)-\mathbf{w}_{*} \right\|^2_2 &= E\left\|r_{Q_{\text{rand}}}\left({\mathbf{w}}_{t}\right)+{\mathbf{w}}_{t}-\mathbf{w}_{*} \right\|^2_2\\
    &= E\left\|r_{Q_{\text{rand}}}\left({\mathbf{w}}_{t}\right) \right\|^2_2+2E\left\langle r_{Q_{\text{rand}}}\left({\mathbf{w}}_{t}\right), {\mathbf{w}}_{t}-\mathbf{w}_{*} \right\rangle+E\left\|{\mathbf{w}}_{t}-\mathbf{w}_{*} \right\|^2_2\\
    &= E\left\|r_{Q_{\text{rand}}}\left({\mathbf{w}}_{t}\right) \right\|^2_2+E\left\|{\mathbf{w}}_{t}-\mathbf{w}_{*} \right\|^2_2
\end{align*}
where we use the fact that $Q_\text{rand}$ is an unbiased quantizer. Then, we have,
\begin{align*}
E\left\|{\mathbf{w}}_{t+1}^k-\mathbf{w}_{*}\right\|^2_2 \le& E\left\|Q_{\text{rand}}\left({\mathbf{w}}_{t}\right)-\mathbf{w}_{*} \right\|^2_2-2U\eta_tE\left(F_k\left(Q_{\text{rand}}\left({\mathbf{w}}_{t}\right)\right)-F_k\left(\mathbf{w}_{*}\right)\right)\\
&+\eta_tL\sum_{u\in[U]}\left\|Q_{\text{rand}}\left({\mathbf{w}}_{t}\right)-Q_{\text{det}}\left({\mathbf{v}}_{t,u}^k\right)\right\|^2_2+\eta_t^2U^2G^2\\
=& E\left\|r_{Q_{\text{rand}}}\left({\mathbf{w}}_{t}\right) \right\|^2_2+E\left\|{\mathbf{w}}_{t}-\mathbf{w}_{*} \right\|^2_2-2U\eta_tE\left(F_k\left(Q_{\text{rand}}\left({\mathbf{w}}_{t}\right)\right)-F_k\left(\mathbf{w}_{*}\right)\right)\\
&+\eta_tL\sum_{u\in[U]}\left\|Q_{\text{rand}}\left({\mathbf{w}}_{t}\right)-Q_{\text{det}}\left({\mathbf{v}}_{t,u}^k\right)\right\|^2_2+\eta_t^2U^2G^2
\end{align*}

Using Lemma \ref{l.q.2} we have,
\begin{align*}
E\left\|r_{Q_\text{rand}}\left({\mathbf{w}}_{t+1}^k\right)\right\|^2_2&=E\left\|r_{Q_\text{rand}}\left(Q_{\text{rand}}\left({\mathbf{w}}_{t}\right)-\eta_t\sum_{u\in[U]}\nabla F_k\left(Q_{\text{det}}\left({\mathbf{v}}_{t,u}^k\right);\xi_{t,u}^k \right) \right)\right\|^2_2\\
&\le S\sqrt{d} E\left\|\eta_t\sum_{u\in[U]}\nabla F_k\left(Q_{\text{det}}\left({\mathbf{v}}_{t,u}^k\right);\xi_{t,u}^k \right)\right\|_2\le S\sqrt{d} G U \eta_t 
\end{align*}
where $U$ is the number of local iterates. Finally, we can upper bound RHS as,
\begin{align*}
E&\left\|\mathbf{w}_{t+1}-\mathbf{w}_{*}\right\|^2_2\le\frac{1}{K} \left(\sum_{k\in[K]} E\left\|r_{Q_\text{rand}}\left({\mathbf{w}}_{t+1}^k\right)\right\|^2_2+E\left\|{\mathbf{w}}_{t+1}^k-\mathbf{w}_{*}\right\|^2_2\right)\\
\le&E\left\|{\mathbf{w}}_{t}-\mathbf{w}_{*} \right\|^2_2-2U\eta_tE\left(F\left(Q_{\text{rand}}\left({\mathbf{w}}_{t}\right)\right)-F\left(\mathbf{w}_{*}\right)\right)+\frac{\eta_tL}{K}\sum_{k\in[K]}\sum_{u\in[U]}\left\|Q_{\text{rand}}\left({\mathbf{w}}_{t}\right)-Q_{\text{det}}\left({\mathbf{v}}_{t,u}^k\right)\right\|^2_2\\
&+2S\sqrt{d} G U \eta_t+\eta_t^2U^2G^2\\
=&E\left\|{\mathbf{w}}_{t}-\mathbf{w}_{*} \right\|^2_2-2U\eta_tE\left(F\left(Q_{\text{rand}}\left({\mathbf{w}}_{t}\right)\right)-F\left(\mathbf{w}_{*}\right)\right)+\eta_tL U V_t+2S\sqrt{d} G U \eta_t+\eta_t^2U^2G^2
\end{align*}
This completes Eq. \ref{A.1}'s proof. 
\begin{remark}\label{r.q.3}
    Note that we extensively use unbiasedness of stochastic quantization, $E\left\langle \text{Vector},r_{Q_\text{rand}}\left({\mathbf w}\right)\right\rangle = {\mathbf 0}$. Otherwise, we need to upper bound this term. There exists cases where a biased resetting diverges \cite{beznosikov2023biased}. Hence, stochastic quantization is needed for convergence. 
\end{remark}

Next, we prove Eq. \ref{A.2} for upper bounding the drift $V_t$ in round $t$ defined in \eqref{A.2}.
\begin{align*}
    E&\left\|Q_{\text{det}}\left({\mathbf{v}}_{t,u+1}^k\right)-Q_{\text{rand}}\left({\mathbf{w}}_{t}\right)\right\|^2_2 = E\left\|r_{Q_{\text{det}}}\left({\mathbf{v}}_{t,u+1}^k\right)+{\mathbf{v}}_{t,u+1}^k-Q_{\text{rand}}\left({\mathbf{w}}_{t}\right)\right\|^2_2\\
    =& E\left\|r_{Q_{\text{det}}}\left({\mathbf{v}}_{t,u+1}^k\right)+{\mathbf{v}}_{t,u}^k- \eta_t \nabla F_k\left(Q_{\text{det}}\left({\mathbf{v}}_{t,u}^k\right);\xi_{t,u}^k \right)-Q_{\text{rand}}\left({\mathbf{w}}_{t}\right)\right\|^2_2\\
    =& E\left\|r_{Q_{\text{det}}}\left({\mathbf{v}}_{t,u+1}^k\right)-r_{Q_{\text{det}}}\left({\mathbf{v}}_{t,u}^k\right)- \eta_t \nabla F_k\left(Q_{\text{det}}\left({\mathbf{v}}_{t,u}^k\right);\xi_{t,u}^k \right)+Q_{\text{det}}\left({\mathbf{v}}_{t,u}^k\right)-Q_{\text{rand}}\left({\mathbf{w}}_{t}\right)\right\|^2_2\\
    \le& \frac{U}{U-1}E\left\|Q_{\text{det}}\left({\mathbf{v}}_{t,u}^k\right)-Q_{\text{rand}}\left({\mathbf{w}}_{t}\right)\right\|^2_2+UE\left\|r_{Q_{\text{det}}}\left({\mathbf{v}}_{t,u+1}^k\right)-r_{Q_{\text{det}}}\left({\mathbf{v}}_{t,u}^k\right)- \eta_t \nabla F_k\left(Q_{\text{det}}\left({\mathbf{v}}_{t,u}^k\right);\xi_{t,u}^k \right)\right\|^2_2\\
    \le& \frac{U}{U-1}E\left\|Q_{\text{det}}\left({\mathbf{v}}_{t,u}^k\right)-Q_{\text{rand}}\left({\mathbf{w}}_{t}\right)\right\|^2_2\\
    &+3UE\left\|r_{Q_{\text{det}}}\left({\mathbf{v}}_{t,u+1}^k\right)\right\|^2_2+3UE\left\|r_{Q_{\text{det}}}\left({\mathbf{v}}_{t,u}^k\right)\right\|^2_2+3U\eta_t^2E\left\|\nabla F_k\left(Q_{\text{det}}\left({\mathbf{v}}_{t,u}^k\right);\xi_{t,u}^k \right)\right\|^2_2\\
    \le& \frac{U}{U-1}E\left\|Q_{\text{det}}\left({\mathbf{v}}_{t,u}^k\right)-Q_{\text{rand}}\left({\mathbf{w}}_{t}\right)\right\|_2^2+3UE\left\|r_{Q_{\text{det}}}\left({\mathbf{v}}_{t,u+1}^k\right)\right\|^2_2+3UE\left\|r_{Q_{\text{det}}}\left({\mathbf{v}}_{t,u}^k\right)\right\|^2_2 + 3U\eta_t^2G^2
\end{align*}
where we use $\left\|{\mathbf{x}}+{\mathbf{y}}\right\|^2_2\le \left(1+\frac{1}{A}\right)\left\|{\mathbf{x}}\right\|^2_2 + (A+1)\left\|{\mathbf{y}}\right\|^2_2$, triangular inequality and bound on the gradients.

Let's bound $E\left\|r_{Q_{\text{det}}}\left({\mathbf{v}}_{t,u+1}^k\right)\right\|^2_2$ using Lemma \ref{l.q.2} as,
\begin{align*}
E\left\|r_{Q_{\text{det}}}\left({\mathbf{v}}_{t,u+1}^k\right)\right\|^2_2&=E\left\|r_{Q_{\text{det}}}\left(Q_{\text{rand}}\left({\mathbf{w}}_{t}\right)-\eta_t\sum_{s\in[u]}\nabla F_k\left(Q_{\text{det}}\left({\mathbf{v}}_{t,s}^k\right);\xi_{t,s}^k\right)\right)\right\|^2_2 \\
&\le S\sqrt{d} E\left\|\eta_t\sum_{s\in[u]}\nabla F_k\left(Q_{\text{det}}\left({\mathbf{v}}_{t,s}^k\right);\xi_{t,s}^k\right)\right\|_2\le S\sqrt{d} G u \eta_t 
\end{align*}
This leads to 
\begin{align*}
    E\left\|Q_{\text{det}}\left({\mathbf{v}}_{t,u+1}^k\right)-Q_{\text{rand}}\left({\mathbf{w}}_{t}\right)\right\|^2_2 
    \le& \frac{U}{U-1}E\left\|Q_{\text{det}}\left({\mathbf{v}}_{t,u}^k\right)-Q_{\text{rand}}\left({\mathbf{w}}_{t}\right)\right\|^2_2+3UE\left\|r_{Q_{\text{det}}}\left({\mathbf{v}}_{t,u+1}^k\right)\right\|^2_2\\
    &+3UE\left\|r_{Q_{\text{det}}}\left({\mathbf{v}}_{t,u}^k\right)\right\|^2_2 + 3U\eta_t^2G^2\\
    \le& \frac{U}{U-1}E\left\|Q_{\text{det}}\left({\mathbf{v}}_{t,u}^k\right)-Q_{\text{rand}}\left({\mathbf{w}}_{t}\right)\right\|^2_2+6U^2S\sqrt{d} G\eta_t  + 3U\eta_t^2G^2
\end{align*}

Let's unroll the recursion noting that $Q_{\text{det}}\left({\mathbf{v}}_{t,1}^k\right)=Q_{\text{rand}}\left({\mathbf{w}}_{t}\right)$,
\begin{align*}
    E&\left\|Q_{\text{det}}\left({\mathbf{v}}_{t,u+1}^k\right)-Q_{\text{rand}}\left({\mathbf{w}}_{t}\right)\right\|^2_2 
    \le \frac{U}{U-1}E\left\|Q_{\text{det}}\left({\mathbf{v}}_{t,u}^k\right)-Q_{\text{rand}}\left({\mathbf{w}}_{t}\right)\right\|^2_2+6U^2S\sqrt{d} G \eta_t  + 3U\eta_t^2G^2\\
    \le&\left(\frac{U}{U-1}\right)^2E\left\|Q_{\text{det}}\left({\mathbf{v}}_{t,u-1}^k\right)-Q_{\text{rand}}\left({\mathbf{w}}_{t}\right)\right\|^2_2+\left(6U^2S\sqrt{d} G \eta_t  + 3U\eta_t^2G^2\right)\left(1+\frac{U}{U-1}\right)\\
    &\ldots\\
    \le&\left(6U^2S\sqrt{d} G \eta_t  + 3U\eta_t^2G^2\right)\left(1+\frac{U}{U-1}+\ldots+\left(\frac{U}{U-1}\right)^{u-1}\right)
\end{align*}

Let's bound the second term in the RHS as,

\begin{align*}
    1+\frac{U}{U-1}+\ldots+\left(\frac{U}{U-1}\right)^{u-1}&\le u \left(\frac{U}{U-1}\right)^{u-1} = u \left(1+\frac{1}{U-1}\right)^{u-1}\le U \left(1+\frac{1}{U-1}\right)^{U-1} \le Ue \le 3U
\end{align*}

Hence we get 
\begin{align}
\label{eq..0}E\left\|Q_{\text{det}}\left({\mathbf{v}}_{t,u+1}^k\right)-Q_{\text{rand}}\left({\mathbf{w}}_{t}\right)\right\|^2_2 \le 18U^3S\sqrt{d} G \eta_t  + 9U^2\eta_t^2G^2
\end{align}

Note that we inherently assume $U>1$ in order to have a coefficient as $\frac{U}{U-1}$. Assume $U=1$. Then we have, $V_t=0$ by definition and Eq. \ref{eq..0} holds. If we average Eq. \ref{eq..0} over $U$ and $K$ we get Eq. \ref{A.2} as,
$V_t \le 18U^3S\sqrt{d} G \eta_t  + 9U^2\eta_t^2G^2$.

\end{proof}

\subsection{Proof of the Main Theorem}

Now, we are ready to present the main theorem on the convergence of the proposed FP8FedAvg-UQ algorithm. 

\begin{theorem}[FP8FedAvg-UQ Convergence]\label{thm.fl.qat}
    \textbf{
    For convex and smooth federated losses 
    with bounded unbiased stochastic gradients using a quantization method with bounded scales during training and an unbiased quantization with bounded scales for model transfer, we have,
    $$E\left[F\left(Q\left(\mathbf{w}_\tau\right)\right)-F\left(\mathbf{w}_{*}\right)\right] = O\left(\frac{1}{\sqrt{TU}} \|\mathbf{w}_{1}-\mathbf{w}_{*}\|^2_2+\frac{1}{T}UG^2L+\frac{1}{\sqrt{T}}G\sqrt{U}\left(G+U^2S\sqrt{d}L\right) +S\sqrt{d}G\right)$$
    where $\tau$ is a random variable that takes values in $\{1,2,\ldots,T\}$ with equal probability, $T$ is the number of rounds, 
    $U$ is the total number of updates done in each round, quantization scales $s_i$ are uniformly bounded by $S$, $\mathbf{w}_1$ is the initial model, and $\mathbf{w}_*$ is an optimal solution of \eqref{standard}.
    }
\end{theorem}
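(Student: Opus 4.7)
The plan is to combine the two bounds in Lemma~\ref{l.fl.1} into a single per-round recursion on $E\|\mathbf{w}_t - \mathbf{w}_*\|^2_2$, telescope across the $T$ communication rounds, and then tune the constant learning rate $\eta$ to balance the resulting terms, mirroring the strategy used for the centralized QAT Theorem~\ref{thm.qat} but with the extra terms arising from local updates, quantized communication, and client sampling.

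First I would substitute the drift bound $V_t \le 18U^3 S\sqrt{d}G\eta + 9U^2 \eta^2 G^2$ from Eq.~\eqref{A.2} into Eq.~\eqref{A.1}. With a constant step size $\eta_t \equiv \eta$, this yields a one-step inequality of the schematic form
\begin{align*}
E\|\mathbf{w}_{t+1} - \mathbf{w}_*\|^2_2 \le\ & E\|\mathbf{w}_t - \mathbf{w}_*\|^2_2 - 2U\eta\, E\bigl[F(Q_{\text{rand}}(\mathbf{w}_t)) - F(\mathbf{w}_*)\bigr] \\
& + 2S\sqrt{d}GU\eta + U^2G^2\eta^2 + 18U^4 S\sqrt{d}GL\eta^2 + 9U^3 G^2L\eta^3.
\end{align*}
Rearranging to put the objective gap on the left, dividing by $2U\eta$, summing $t = 1, \ldots, T$ (the squared-distance terms telescope), and dividing by $T$ gives
\begin{align*}
\frac{1}{T}\sum_{t=1}^T E\bigl[F(Q_{\text{rand}}(\mathbf{w}_t)) - F(\mathbf{w}_*)\bigr] \le \frac{\|\mathbf{w}_1 - \mathbf{w}_*\|_2^2}{2TU\eta} + S\sqrt{d}G + \tfrac{1}{2}UG^2\eta + 9U^3 S\sqrt{d}GL\eta + \tfrac{9}{2}U^2 G^2 L\eta^2.
\end{align*}
Sampling $\tau$ uniformly from $\{1,\ldots,T\}$ identifies the LHS with $E[F(Q_{\text{rand}}(\mathbf{w}_\tau)) - F(\mathbf{w}_*)]$, exactly the quantity in the theorem statement.

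Finally I would pick $\eta = 1/\sqrt{TU}$ to balance the $1/\eta$ distance term against the terms linear in $\eta$. Under this choice, the five summands above become, respectively, $\|\mathbf{w}_1 - \mathbf{w}_*\|_2^2/\sqrt{TU}$, the irreducible $S\sqrt{d}G$, $G^2\sqrt{U}/\sqrt{T}$, $U^{2.5}S\sqrt{d}GL/\sqrt{T}$, and $UG^2L/T$, which together reassemble into $\mathcal{T}_1+\mathcal{T}_2+\mathcal{T}_3$ as stated.

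The main obstacle is not the algebra, which is mechanical, but the fact that the $2S\sqrt{d}GU\eta$ summand in Eq.~\eqref{A.1} -- which originates from the variance $E\|r_{Q_{\text{rand}}}(\mathbf{w}_{t+1}^k)\|_2^2$ of the uplink unbiased quantizer and is linear rather than quadratic in $\eta$ -- does \emph{not} vanish after dividing by $2U\eta$; it survives as the $\eta$-independent floor $\mathcal{T}_3 = S\sqrt{d}G$. This is the theoretical manifestation of the irreducible communication-quantization error noted in Remark~\ref{r.2} and is what prevents a standard $O(1/\sqrt{T})$ rate from being recovered in full. Care must also be taken in bookkeeping the powers of $U$ through the substitution of $V_t$, since the drift bound carries an extra factor of $U^3$ relative to the naive SGD analysis and it is precisely this factor that produces the $U^{2.5}$ scaling inside $\mathcal{T}_2$.
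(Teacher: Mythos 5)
Your proposal is correct and follows the same route as the paper's own proof: substitute the drift bound \eqref{A.2} into \eqref{A.1}, divide by $2U\eta$, telescope, average, and set $\eta=1/\sqrt{TU}$, with all the resulting powers of $U$ and $T$ worked out accurately (including the $U^{2.5}$ factor in $\mathcal{T}_2$ and the surviving floor $S\sqrt{d}G$ in $\mathcal{T}_3$).
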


Combining Eq. \ref{A.1} and $\eta_t L U$ times Eq. \ref{A.2} gives,
\begin{align*}
E\|\mathbf{w}_{t+1}-\mathbf{w}_{*}\|^2_2\le& E\left\|{\mathbf{w}}_{t}-\mathbf{w}_{*} \right\|^2_2-2U\eta_tE\left(F\left(Q_{\text{rand}}\left({\mathbf{w}}_{t}\right)\right)-F\left(\mathbf{w}_{*}\right)\right)+2S\sqrt{d} G U \eta_t+\eta_t^2U^2G^2\\&+18U^4S\sqrt{d} G \eta_t^2 L + 9U^3\eta_t^3G^2L
\end{align*}

Rearranging the terms and dividing both sides with $2U\eta_t$ gives,
\begin{align*}
    E\left[F\left(Q_{\text{rand}}\left({\mathbf{w}}_{t}\right)\right)-F\left(\mathbf{w}_{*}\right)\right]\le& \frac{1}{2U\eta_t}\left(-E\|\mathbf{w}_{t+1}-\mathbf{w}_{*}\|^2_2+ E\left\|{\mathbf{w}}_{t}-\mathbf{w}_{*} \right\|^2_2\right)\\&+S\sqrt{d}G+\frac{1}{2}\eta_tUG^2+9U^3S\sqrt{d} G \eta_t L + \frac{9}{2}U^2\eta_t^2G^2L
\end{align*}

Let $\eta_t=\frac{1}{\sqrt{UT}}$. Note that we can get the same rate with a decreasing learning rate as well. Let's average the inequality over $t$ as,

\begin{align*}
    E&\left[\left[\frac{1}{T}\sum_{t=1}^{T}F\left(Q_{\text{rand}}\left(\mathbf{w}_t\right)\right)\right]-F\left(\mathbf{w}_{*}\right)\right] \\
    &\leq \frac{1}{2\sqrt{TU}}\left[-E\|\mathbf{w}_{T+1}-\mathbf{w}_{*}\|^2_2+ E\|\mathbf{w}_{1}-\mathbf{w}_{*}\|^2_2\right]+\frac{1}{T}\frac{9}{2}UG^2L+\frac{1}{\sqrt{T}}G\sqrt{U}\left(\frac{1}{2}G+9U^2S\sqrt{d}L\right) +S\sqrt{d}G\\
    &\leq \frac{1}{2\sqrt{TU}} \|\mathbf{w}_{1}-\mathbf{w}_{*}\|^2_2+\frac{1}{T}\frac{9}{2}UG^2L+\frac{1}{\sqrt{T}}G\sqrt{U}\left(\frac{1}{2}G+9U^2S\sqrt{d}L\right) +S\sqrt{d}G\\
    &=O\left(\frac{1}{\sqrt{TU}} \|\mathbf{w}_{1}-\mathbf{w}_{*}\|^2_2+\frac{1}{T}UG^2L+\frac{1}{\sqrt{T}}G\sqrt{U}\left(G+U^2S\sqrt{d}L\right) +S\sqrt{d}G\right)\qed
\end{align*}

\end{document}